\newcounter{mynotes}
\title{Optimization of Inter-group  Criteria for Clustering with
Minimum Size Constraints}
\author{%
  Eduardo S. Laber \\
  Department of Computer Science\\
  PUC-Rio\\
  Rio de Janeiro, RJ - Brazil \\
  \texttt{laber@inf.puc-rio.br} \\
  \And
  Lucas Murtinho \\
  Department of Computer Science \\
  PUC-Rio \\
  Rio de Janeiro, RJ - Brazil \\
  \texttt{lmurtinho@aluno.puc-rio.br} \\
}
\theoremstyle{plain}
\newtheorem{theorem}{Theorem}[section]
\newtheorem{proposition}[theorem]{Proposition}
\newtheorem{lemma}[theorem]{Lemma}
\theoremstyle{definition}
\theoremstyle{remark}
\theoremstyle{example}
\newtheorem{example}[theorem]{Example}
\newcommand{\spmst}{{\tt \mbox{MST-Sp}}}
\newcommand{\singlelink}{{\tt single-linkage}}
\newcommand{\spac}{{\tt spacing}}
\newcommand{\minsp}{{\tt \mbox{Min-Sp}}}
\newcommand{\dist}{{\tt dist}}
\newcommand{\C}{\mathcal{C}}
\newcommand{\X}{\mathcal{X}}
\newcommand{\A}{\mathcal{A}}
\begin{document}
\maketitle

\begin{abstract}

Internal measures that are used to assess the quality of a clustering  usually take into account  intra-group and/or inter-group criteria. There are many papers in the literature that propose algorithms with provable approximation guarantees for optimizing  the former. However,  the optimization of inter-group criteria  is much less understood.

Here, we contribute to the state-of-the-art of this literature by
devising algorithms with provable guarantees  for the maximization of two natural inter-group  criteria, namely the minimum spacing and the minimum spanning tree spacing. The former is the minimum distance between points in different groups while the  latter captures separability through the cost of the minimum spanning tree that connects all groups. We obtain results for both the unrestricted case, in which no constraint on the clusters is
imposed, and for the constrained case where each group is required to have a minimum number of points. Our constraint is motivated
by the fact that the popular \singlelink, which optimizes both criteria in the unrestricted case, produces clusterings with many   tiny groups.

To complement our work, we present an empirical study with 10 real datasets, providing evidence that our methods work very well in practical settings. 

\end{abstract}

\section{Introduction}

Data clustering is a fundamental tool in machine learning that is commonly used in exploratory analysis and to reduce the computational resources required to handle large datasets. For comprehensive descriptions of different clustering methods and their applications, we refer to \cite{Jain:1999,HMMR15}. In general, clustering is the problem of partitioning a set of items so that similar items are grouped together and dissimilar items are separated. Internal measures that are used to assess the quality of a clustering (e.g. Silhouette coefficient \cite{Rousseeuw87silhouetteCluster} and Davies-Bouldin index \cite{davies_cluster_1979}) usually take into account  intra-group and inter-group criteria. The former considers the cohesion of a group while the latter measures how separated the groups are.

There are many papers in the literature that propose algorithms with provable approximation guarantees for optimizing  intra-group criteria. Algorithms   for $k$-center, $k$-medians and $k$-means cost functions \cite{DBLP:journals/tcs/Gonzalez85,DBLP:journals/jcss/CharikarGTS02,DBLP:journals/siamcomp/AhmadianNSW20} are some examples. However, optimizing inter-group criteria is much less understood. Here, we contribute to the state-of-the-art  by considering  the maximization of two natural inter-group  criteria, namely the minimum spacing and the minimum spanning tree spacing.

\medskip
\noindent {\bf Our Results.}
The spacing between two groups of points, formalized in Section \ref{sec:prelim},  is the minimum distance between a point in the first and a point in the second group. We consider two criteria for capturing the inter-group distance of a clustering, the  minimum spacing (\minsp) and minimum spanning tree spacing (\spmst). The former is given by the spacing of the two groups with the smallest spacing in the clustering. The latter, as the name suggests, measures the separability of a clustering according to the cost of the minimum spanning tree (MST) that connects its groups; the largest the cost, the most separated the clustering is. 

We first show that \singlelink, a procedure for building hierarchical clustering, produces a clustering that maximizes the MST spacing. This result contributes to a better  understanding of this very popular algorithm since the only provable guarantee that we are aware  of (in terms of optimizing some natural cost function) is that it  maximizes the minimum spacing of a clustering \cite{DBLP:books/daglib/0015106}[Chap 4.7]. Our guarantee [Theorem \ref{thm:mst-stronger}] is stronger than the existing one in the sense that any  clustering that maximizes the  MST spacing also  maximizes the minimum spacing. Figure \ref{fig:clustering} shows an example where the minimum spacing criterion does not characterize well the behavior of \singlelink.

Despite its nice properties, \singlelink \, tends to build clusterings with very small groups, which may be undesirable in practice. This can be seen in our experiments (see Figure \ref{fig:sigle-link-at-most-2-groups}) and is related to the well-documented fact that \singlelink \, suffers from the chaining effect \citep{Jain:1999}[Chap 3.2].

To mitigate this problem we consider the optimization of the aforementioned criteria under size constraints. Let $L$ be a given positive integer that determines the minimum size that every group in a clustering should have. A $(k, L)-$clustering is a clustering with $k$ groups in which the smallest group has at least $L$ elements. For the \minsp \, criterion we devise an algorithm that builds clustering with at least $L(1-\epsilon)$  points per group while guaranteeing that its minimum spacing is not smaller than that of an optimal $(k, L)$-clustering.  This result is the best possible in the sense that, unless $P=NP$, it is not possible to obtain an approximation with subpolynomial factor for the case in which the clustering is required to satisfy the hard constraint of $L$ points per group. For the \spmst \, criterion we also devise an algorithm with provable guarantees. It produces a clustering whose \spmst \, is at most a $\log k $ factor from the optimal one and whose groups have each at least $\rho L(1-\epsilon)/2$ points, where $\rho$ is a number in the interval $[1,2]$ that depends
on the ratio $n/kL$. We also prove that the maximization of this criterion is APX-Hard for any fixed $k$.

\begin{figure}[ht]
\vskip 0.2in
\begin{center}
\centerline{\includegraphics[width=\columnwidth]{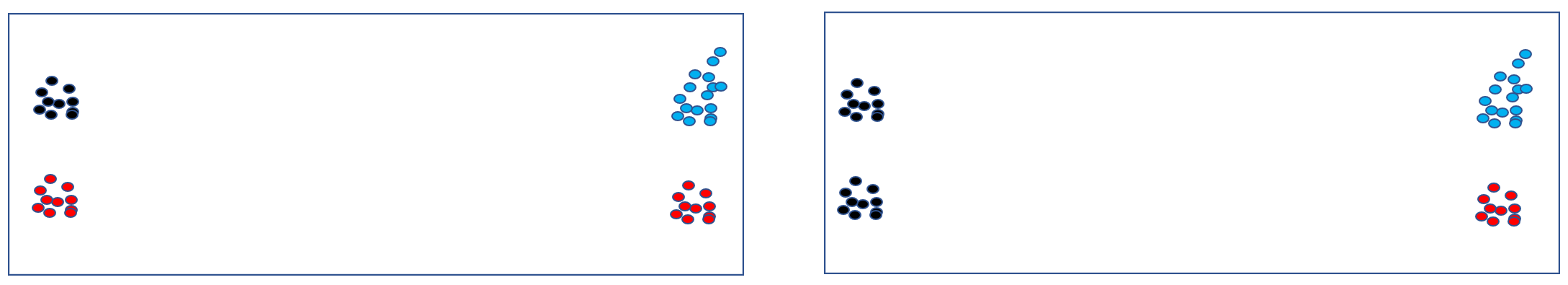}}
\caption{Partitions with 3 groups (defined by colors) that
maximize the minimum spacing. The rightmost one is built by \singlelink, but both of them maximize the minimum spacing -- showing that this condition alone is insufficient to properly characterize \singlelink's behavior.}
\label{fig:clustering}
\end{center}
\vskip -0.2in
\end{figure}

We complement our investigation  with an experimental study where we compare the behaviour of the clustering produced by our proposed algorithms  with those produced by $k$-means and  \singlelink \, for 10 real datasets. Our algorithms, as expected, present better results than $k$-means for the criteria under consideration while avoiding the issue of small groups observed for \singlelink. 

\medskip
\noindent {\bf Related Work.}
We are  only aware of a few works that  propose clustering algorithms with provable guarantees for optimizing separability (inter-group) criteria. We explain some of them.

The maximum $k$-cut problem is a widely studied problem in the combinatorial optimization community and  its solution can be naturally viewed as a clustering that optimizes a separability criterion. Given an edge-weighted  graph $G$ and an integer $k \ge 2$, the  maximum $k$-cut problem consists of finding a $k$-cut (partition of the vertices of the graph into $k$ groups) with maximum weight, where the weight of a cut is given by the sum of the weights of the edges that have vertices in different groups of the cut. The weight of the $k$-cut can be seen as a separability criterion, where the distance between two groups is given by the sum of the  pairwise distances of its points. It is well-known that a random assignment of the vertices (points) yields an $(1-1/k)$-approximation algorithm. This bound can be slightly improved using semi-definitive programming \cite{DBLP:journals/algorithmica/FriezeJ97}.

The minimum spacing, one of the criteria we studied here, also admits an algorithm with provable guarantees. In fact, as we already mentioned, it can be maximized in polynomial time via the \singlelink \, algorithm  \cite{DBLP:books/daglib/0015106}[Chap 4.7]. In what follows, we discuss works that study this algorithm as well as works that study problems and/or methods related to it.

\singlelink \, has been the subject of a number of researches \citep{zahn1971graph,kleinbergimpossibility,DBLP:journals/jmlr/CarlssonM10,hofmeyr2020connecting}. \cite{kleinbergimpossibility} presents an axiomatic study of clustering, the main result of which is a proof that it is not possible to define clustering functions that simultaneously satisfy three desirable properties introduced in the paper. However, it was shown that by choosing a proper stopping rule for the \singlelink \, it  satisfies any two of the three properties. \cite{DBLP:journals/jmlr/CarlssonM10} replaces the property of \cite{kleinbergimpossibility} that the clustering function must output a partition with the property that it must generate a tree (dendogram). Then, it was established that  \singlelink \, satisfies the new set of properties. In more recent work, \cite{hofmeyr2020connecting} establishes a connection between minimum spacing and spectral clustering. While the aforementioned works prove that \singlelink \, has important properties, in practice, it is reported that sometimes it presents poor performance due to the so-called chaining effect  (\cite{Jain:1999}[Chap 3.2]).

\singlelink \, belongs to the family of algorithms that are used to build Hierarchical Agglomerative Clustering (HAC). \cite{DBLP:conf/stoc/Dasgupta16} frames the problem of building a hierarchical clustering as a combinatorial optimization problem, where the goal is to output a hierarchy/tree that minimizes a given cost function. The paper proposes a cost function that has desirable properties and an algorithm to optimize it. This result was improved and extended by a series of papers \citep{DBLP:conf/nips/RoyP16,DBLP:conf/soda/CharikarC17,DBLP:journals/jacm/Cohen-AddadKMM19}. The algorithms discussed in these papers do not seem to be employed in practice. Recently, there has been some effort to analyse  HAC algorithms  that are popular in practice, such  as  the Average Link \citep{DBLP:conf/nips/MoseleyW17,DBLP:journals/jacm/Cohen-AddadKMM19}. Our investigation of \singlelink \, can be naturally connected with this line of research.

Finally, in a recent work,  \cite{ahmadi22individual} studied the notion of individual preference stability (IP-stability) in which a point is IP-stable if it is closer to its group (on average) than to any other group (on average). The clustering produced by \singlelink \, presents a kind of  individual preference stability in the sense that each point is closer to some point in its group than to any point in some other group. 

\noindent {\bf Potential Applications.}
We discuss two cases in which the maximization of inter-group criteria via our algorithms may be relevant: ensuring data diversity when training machine learning algorithms and population diversity in candidate solutions for genetic algorithms.

When training a machine learning model, ensuring data diversity may be crucial for achieving good results \citep{gong2019diversity}. In situations in which all available data cannot be used for training (e.g. training in the cloud with budget constraints), it is important to have a method for selecting a diverse subset of the data, and our algorithms can be used for this: to select $n = kL$ elements, one can partition the full data set into $k$  clusters, all of them containing at least $L$ members, and then select $L$ elements from each cluster. Using \singlelink \, to create $kL$ groups and then picking up one element per group is also a possibility but it would increase the probability of an over-representation of outliers in the obtained subset, as these outliers would likely be clustered as singletons (see Figure \ref{fig:sigle-link-at-most-2-groups} in Section \ref{sec:experiments}).

Note that our algorithms can be used to create not only one but several diverse and disjoint subsets, which might be relevant to generate partitions for cross-validation or for evaluating a model's robustness. For that, each subset is obtained by picking exactly one point per group.

For genetic algorithms, maintaining diversity over the iterations is important to ensure a good exploration of the search space \citep{gupta2012overview}. If all candidate solutions become too similar, the algorithm will become too dependent on mutations for improvement, as the offspring of two solutions will likely be similar to its two parents; and mutation alone may not be enough to fully explore the search space. We can apply our algorithms in a similar manner as mentioned above, by partitioning, at each iteration, the solutions into $k$ clusters of minimum size $L$ and selecting the best $L$ solutions from each cluster, according to the objective function of the underlying optimization problem, to maintain the solution population simultaneously optimized and diverse. Using \singlelink \, could lead to several poor solutions being selected to remain in the population, in case they are clustered as singletons.

We finally note that an algorithm that optimizes intra-group measures (e.g. $k$-means or $k$-medians) would not necessarily guarantee diversity for the aforementioned applications, as points from different groups can be close to each other.

\section{Preliminaries}
\label{sec:prelim}
Let ${\cal X}$ be a set of $n$ points and let $\dist:{\cal X} \times {\cal X} \mapsto \mathbb{R}^+$ be a distance function that maps every pair of points in ${\cal X}$ into a non-negative real.

Given a $k$-clustering ${\cal C}=(C_1,\ldots,C_k)$, we define the spacing    between two distinct groups $C_i$ and $C_j$ of ${\cal C}$  as $$\spac(C_i,C_j):=\min_{x \in C_i, y \in C_j} \{\dist(x,y)\}.$$ Then, the minimum spacing of ${\cal C}$ 
is given by $$\mbox{\minsp} ( {\cal C} ):= \min_{C_i, C_j \in {\cal C} \atop i \ne j }  \{ \spac(C_i,C_j) \}.$$
A $k$-clustering ${\cal C}$ induces on $(\X,\dist)$ an edge-weighted  complete graph $G_{\cal C}$
whose vertices are the groups of ${\cal C}$ 
and the weight $w(e_{ij})$  of the edge $e_{ij}$ between 
groups $C_i$ and $C_j$ is given by $w(e_{ij})=\spac(C_i,C_j)$.
For a set of edges $S \in G_{\cal C}$ we define $w(S):=\sum_{e \in S} w(e)$.

The {\em minimum  spanning tree spacing} of ${\cal C}$ (\spmst(${\cal C}$)) is  defined as the sum of the weights of the edges of the  minimum  spanning tree (MST$(G_{\C}))$ for  $G_{\cal C}$. In formulae,
$$ \mbox{\spmst}({\cal C}):= \sum_{e \in MST(G_{\cal C})} w(e).$$
Here, we will be  interested in the problems of finding partitions with maximum  \minsp\, and maximum \spmst \, both in the unrestricted case in which no constraint on the groups is imposed and  in the constrained case where each group is required to have at least $L$ points.

\medskip

\noindent {\bf Single-Linkage}.
We briefly explain \singlelink. The algorithm starts with $n$ groups, each of them consisting of a point in ${\cal X}$. Then, at each iteration,  it merges the two groups with minimum spacing into a new group. Thus, by the end of the iteration $n-k$ it obtains a clustering with $k$ groups. In \cite{DBLP:books/daglib/0015106} it is proved that the \singlelink \, obtains a $k$-clustering with maximum minimum spacing.

\begin{theorem}[\cite{DBLP:books/daglib/0015106}, chap 4.7]
 \label{ref:thm-spacing}
 The \singlelink \, algorithm obtains the $k$-clustering
 with maximum \minsp \, for instance $(\X,\dist)$.
\end{theorem}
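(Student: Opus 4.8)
The plan is to prove optimality through a contradiction argument that exploits the fact that \singlelink \, is exactly Kruskal's algorithm run on the complete distance graph, so its groups can be described through threshold graphs. Let $\C$ be the clustering returned by \singlelink \, and let $s = \minsp(\C)$, which is attained by some pair $x \in C_i$, $y \in C_j$ with $i \ne j$ and $\dist(x,y) = s$. For a threshold $t$, I would write $G_{\le t}$ for the graph on $\X$ whose edges join the pairs at distance at most $t$. The first step is to establish the monotonicity of the merge spacings: if $d_1 \le \cdots \le d_{n-1}$ are the spacings at which the successive merges occur, then this sequence is non-decreasing, because after merging the two closest groups $A,B$ the spacing from the new group $A\cup B$ to any third group $Y$ equals $\min\{\spac(A,Y),\spac(B,Y)\}$, which is at least the previous minimum. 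In particular the last merge performed before stopping at $k$ groups has spacing $d_{n-k}\le s$.

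Next I would extract two structural facts about $\C$. First, each group of $\C$ is internally connected using only edges of weight at most $d_{n-k}\le s$ (every internal merge used such an edge), so each group of $\C$ is contained in a single connected component of $G_{\le s}$. Second, since $x$ and $y$ lie in different groups of $\C$ but $\dist(x,y)=s$, the edge $xy$ of $G_{\le s}$ joins two distinct groups; hence passing from the $k$ groups of $\C$ to the components of $G_{\le s}$ merges at least two of them, and therefore $G_{\le s}$ has at most $k-1$ connected components.

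With these in hand the conclusion follows quickly. Suppose, for contradiction, that some $k$-clustering $\C'$ satisfies $\minsp(\C') > s$. Then no two points at distance at most $s$ can lie in different groups of $\C'$, which means $\C'$ is coarser than the partition of $\X$ into connected components of $G_{\le s}$; consequently $\C'$ has at most $k-1$ groups, contradicting that it has $k$. Hence $\minsp(\C') \le s$ for every $k$-clustering $\C'$, so $\C$ is optimal.

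The point that requires care is the handling of ties among distances: when several pairs are equidistant, the exact identification of \singlelink's groups with components of a threshold graph is sensitive to how the algorithm breaks ties. I would therefore avoid relying on a two-sided correspondence and instead use only the one-directional inclusions above (each group sits inside one component of $G_{\le s}$) together with the monotonicity of the merge spacings; both hold no matter how ties are resolved, so the counting argument that $G_{\le s}$ has at most $k-1$ components goes through unchanged. This is the main obstacle, and isolating the tie-robust statements is what makes the proof clean.
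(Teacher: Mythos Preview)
Your argument is correct. The monotonicity of the merge spacings, the containment of each \singlelink\ group in a component of the threshold graph $G_{\le s}$, and the counting step showing $G_{\le s}$ has at most $k-1$ components all hold as you state them, and your care with ties (relying only on the one-sided inclusion rather than an exact identification of groups with components) is appropriate.

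There is nothing to compare against in the paper itself: Theorem~\ref{ref:thm-spacing} is not proved here but is quoted as a known result from \cite{DBLP:books/daglib/0015106}, Chapter~4.7, and is used as a black box (notably in the proof of Lemma~\ref{lemm:aux-single-link}). Your proof is essentially the classical argument from that reference, phrased via threshold graphs rather than via an explicit exchange on spanning-tree edges; the two formulations are equivalent, and yours has the mild advantage that the tie-handling is made explicit.
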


\singlelink \, and minimum spanning trees are closely related since the former can be seen as the Kruskal's algorithm for building MST's with an early stopping rule. To analyze our algorithms we make use of well-known properties of MST's as  
the cut property[\cite{DBLP:books/daglib/0015106}, Property 4.17]
and the cycle property[\cite{DBLP:books/daglib/0015106}, Property 4.20]. Their statements can be found in the appendix.

For ease of presentation, we assume that all values of $\dist$ are distinct. We note, however, that our results hold if this assumption is dropped.

\section{Relating  \minsp \, and  \spmst \, criteria}
\label{sec:theoretical-algorithms}

We show that \singlelink \, finds the clustering with maximum  \spmst \, and  the maximization of \spmst \, implies the maximization of \minsp. These results are a consequence of Lemma \ref{lemm:aux-single-link} that generalizes the result of Theorem \ref{ref:thm-spacing}. The proof of this lemma can be found in the appendix.

Fix an instance $I=(\X,\dist,k)$. In what follows, $\C_{SL}$ is a $k$-clustering obtained by \singlelink \, for instance $I$ and $T_{SL}$ is a MST for $G_{\C_{SL}}$. Moreover, $w^{SL}_i$ is the weight of the $i$-th smallest weight of $T_{SL}$.
 
\begin{lemma} 
 Let $\C$ be a $k$-clustering for $I$ and let
 $w_i$ be the weight of the $i$-th smallest
 weight in a MST $T$ for the graph $G_{\C}$.
  Then, $w^{SL}_i \ge w_i$.
  \label{lemm:aux-single-link}
\end{lemma}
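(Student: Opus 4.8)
The plan is to reduce the element-wise domination $w_i^{SL}\ge w_i$ to a statement about connectivity thresholds, where the extremal structure of \singlelink\ becomes transparent. First I would record the elementary fact that for two nondecreasing sequences of the same length $k-1$, the entrywise inequality $a_i\ge b_i$ holds for every $i$ if and only if for every threshold $t$ the count $|\{i:a_i\ge t\}|$ dominates $|\{i:b_i\ge t\}|$. Applying this with $a=(w_i^{SL})$ and $b=(w_i)$, it suffices to show that for every $t$ the number of edges of $T_{SL}$ of weight at least $t$ is at least the number of edges of $T$ of weight at least $t$.

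The key tool is a component-counting formula coming from Kruskal's algorithm (equivalently, the cut property): in the MST of any complete edge-weighted graph, the number of tree edges of weight at least $t$ equals the number of connected components of the subgraph keeping only the edges of weight strictly less than $t$, minus one. Applied in $G_{\C}$, this says the number of edges of $T$ of weight $\ge t$ equals $c_{\C}(t)-1$, where $c_{\C}(t)$ is the number of components of the subgraph of $G_{\C}$ retaining the edges of spacing $<t$. I would then read $c_{\C}(t)$ off directly on the point set: let $H_t$ be the graph on $\X$ joining $x,y$ whenever $\dist(x,y)<t$; two groups of $\C$ lie in the same component of the thresholded $G_{\C}$ exactly when they are linked through $H_t$, so $c_{\C}(t)$ equals the number of connected components of the graph obtained from $H_t$ by adding a clique inside each group of $\C$.

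From here the comparison is immediate on one side and uses the structure of \singlelink\ on the other. Writing $m_t$ for the number of components of $H_t$ alone, the within-group cliques can only merge components, and there are only $k$ groups, so for every clustering $c_{\C}(t)\le\min(m_t,k)$. It remains to show \singlelink\ attains this bound, i.e. $c_{\C_{SL}}(t)=\min(m_t,k)$. Here I would use that the groups of $\C_{SL}$ are precisely the connected components of $H_\tau$ for $\tau=w_1^{SL}$ (single-linkage is Kruskal stopped at $k$ components, and $\tau$ is the spacing of the next merge). Then $H_t$ and $\C_{SL}$ are nested: if $t\le\tau$ each $H_t$-component sits inside one group, so the cliques collapse the $m_t\ge k$ components down to exactly the $k$ groups; if $t>\tau$ each group sits inside one $H_t$-component, so the cliques merge nothing and $c_{\C_{SL}}(t)=m_t\le k$. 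In both cases $c_{\C_{SL}}(t)=\min(m_t,k)\ge c_{\C}(t)$, which is the desired threshold inequality.

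The routine steps are the sequence-domination equivalence and the Kruskal component-counting formula. The main obstacle is the identification $c_{\C_{SL}}(t)=\min(m_t,k)$: one must argue carefully that the groups produced by \singlelink\ coincide with the $H_\tau$-connectivity components and that the resulting nesting between $H_t$-components and groups holds at every threshold, since this is exactly the extremal property that prevents single-linkage from fusing distinct near-threshold components faster than any competing clustering. Throughout I would invoke the distinctness of the distances to guarantee unique minimum spanning trees and a clean threshold $\tau$.
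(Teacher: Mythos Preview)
Your argument is correct and takes a genuinely different route from the paper's proof. The paper establishes the lemma by a contraction/induction argument: it first shows (as a separate proposition) that merging the two closest groups in any clustering $\C'$ and contracting the corresponding edge in the MST of $G_{\C'}$ yields the MST of the merged clustering; iterating, $w_i$ is exactly $\minsp$ of a certain $(k-i+1)$-clustering, while $w_i^{SL}$ is $\minsp$ of the \singlelink\ $(k-i+1)$-clustering; the inequality then falls out of Theorem~\ref{ref:thm-spacing} as a black box.

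Your approach instead linearizes the problem via thresholds: the Kruskal component-counting identity turns the entrywise comparison into $c_{\C_{SL}}(t)\ge c_{\C}(t)$ for every $t$, and the nesting between the $H_t$-components and the \singlelink\ groups shows $c_{\C_{SL}}(t)=\min(m_t,k)$, which is manifestly the largest value any $k$-clustering can attain. The upshot is that your proof is self-contained (it does not rely on Theorem~\ref{ref:thm-spacing} and in fact re-proves it at $i=1$), and it exposes more transparently \emph{why} \singlelink\ is extremal: its groups are themselves threshold components, so the within-group cliques never waste a merge. The paper's proof, by contrast, is shorter once Theorem~\ref{ref:thm-spacing} is available and makes the inductive structure (passing from $k$ to $k-1$ clusters) explicit, which dovetails with how the lemma is later used. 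Both approaches lean on the distinctness assumption in the same place: to ensure the MSTs are unique and the threshold $\tau=w_1^{SL}$ cleanly separates the regimes.
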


\begin{theorem}
\label{teo:singlelink-mst}
The clustering $\C_{SL}$ returned by  \singlelink  \, for instance $(\X,\dist,k)$ maximizes the \spmst \,  criterion. 
\end{theorem}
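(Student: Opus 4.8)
The plan is to derive the result directly from Lemma \ref{lemm:aux-single-link} by summing its termwise inequality over the edges of the relevant spanning trees. First I would observe that, for any $k$-clustering $\C$, the induced graph $G_{\C}$ is a complete graph on exactly $k$ vertices (one per group), so any spanning tree of $G_{\C}$ --- in particular a minimum spanning tree --- has exactly $k-1$ edges. Consequently both $T$, an MST for an arbitrary $k$-clustering $\C$, and $T_{SL}$, an MST for $G_{\C_{SL}}$, carry the same number of edges, namely $k-1$, which is what makes a termwise comparison of their sorted weights meaningful.

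Next I would rewrite the two spacings as sums of sorted edge weights. By the definition of \spmst, we have $\mbox{\spmst}(\C) = \sum_{i=1}^{k-1} w_i$ and $\mbox{\spmst}(\C_{SL}) = \sum_{i=1}^{k-1} w^{SL}_i$, where $w_i$ and $w^{SL}_i$ denote the $i$-th smallest edge weights of $T$ and $T_{SL}$, respectively. Lemma \ref{lemm:aux-single-link} gives $w^{SL}_i \ge w_i$ for every index $i$, so summing these inequalities yields
$$\mbox{\spmst}(\C_{SL}) = \sum_{i=1}^{k-1} w^{SL}_i \ge \sum_{i=1}^{k-1} w_i = \mbox{\spmst}(\C).$$
Since $\C$ was an arbitrary $k$-clustering for the instance, this establishes $\mbox{\spmst}(\C_{SL}) \ge \mbox{\spmst}(\C)$ against every competitor, i.e. $\C_{SL}$ maximizes the \spmst \, criterion, as claimed.

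I do not expect any genuine obstacle at this level: the entire difficulty has been pushed into Lemma \ref{lemm:aux-single-link}, whose proof (the termwise dominance of the sorted MST weights produced by \singlelink \, over those of any other clustering) is the real content and is deferred to the appendix. The only points requiring a small amount of care here are confirming that the two MSTs have the same edge count --- immediate since both induced graphs have $k$ vertices --- and making sure the indices $w_i$ and $w^{SL}_i$ are matched in sorted order, exactly as the lemma is phrased, so that the inequality can be summed without any rearrangement argument.
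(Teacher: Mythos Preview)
Your proposal is correct and follows essentially the same approach as the paper: invoke Lemma \ref{lemm:aux-single-link} to get $w^{SL}_i \ge w_i$ for each $i$, then sum over $i=1,\ldots,k-1$ to conclude $\mbox{\spmst}(\C_{SL}) \ge \mbox{\spmst}(\C)$ for an arbitrary $k$-clustering $\C$. The paper's proof is the same one-line summation, just without your explicit remark that both MSTs have $k-1$ edges.
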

\begin{proof}
Let $\C$ be a $k$-clustering for $(\X,\dist,k)$
and let $w_i$ be the weight of the $i$-th cheapest edge of the MST for $G_{\C}$. Since $w^{SL}_i \ge w_i$ for $i=1,\ldots,k-1$,
we have that  $$\mbox{\spmst}(\C_{SL})= \sum_{i=1}^{k-1} w^{SL}_i \ge \sum_{i=1}^{k-1} w_i = \mbox{\spmst} (\C).$$
\end{proof}

\begin{theorem} 
\label{thm:mst-stronger}
Let $\C^*$ be a clustering that maximizes
the \spmst \, criterion for instance
$(\X,\dist,k)$. Then, it also maximizes \minsp \, for this same instance.
\end{theorem}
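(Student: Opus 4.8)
The plan is to reduce everything to the sorted MST weight sequence of a clustering. First I would record the identity that the minimum spacing of any $k$-clustering $\C$ equals $w_1$, the smallest edge weight of the MST $T$ for $G_{\C}$. Indeed, by definition $\mbox{\minsp}(\C)$ is the lightest edge of the complete graph $G_{\C}$, and the globally lightest edge of a connected graph always belongs to its MST, which is unique since all distances are distinct; hence the lightest edge of $G_{\C}$ coincides with the cheapest MST edge $w_1$. This identity converts the statement into a claim purely about the sorted MST weights of $\C^*$ and of $\C_{SL}$.

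Next I would invoke Lemma~\ref{lemm:aux-single-link}, which gives $w^{SL}_i \ge w_i$ for every $i=1,\ldots,k-1$, where $w_i$ denotes the $i$-th smallest MST weight of $G_{\C^*}$. Recall also that $\C_{SL}$ maximizes \minsp \, by Theorem~\ref{ref:thm-spacing} and maximizes \spmst \, by Theorem~\ref{teo:singlelink-mst}. It therefore suffices to show that $\mbox{\minsp}(\C^*)=\mbox{\minsp}(\C_{SL})=w^{SL}_1$, i.e. that $\C^*$ attains the same minimum spacing as $\C_{SL}$.

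The crux is a dominance-plus-equal-sum argument. Since $\C^*$ maximizes \spmst \, and $\C_{SL}$ does as well, their total MST weights coincide: $\sum_{i=1}^{k-1} w_i = \mbox{\spmst}(\C^*) = \mbox{\spmst}(\C_{SL}) = \sum_{i=1}^{k-1} w^{SL}_i$. Combined with the term-by-term inequalities $w^{SL}_i \ge w_i$ supplied by Lemma~\ref{lemm:aux-single-link}, equality of the two sums forces equality in every term, and in particular $w_1 = w^{SL}_1$. Applying the identity from the first step then yields $\mbox{\minsp}(\C^*) = w_1 = w^{SL}_1 = \mbox{\minsp}(\C_{SL})$, so $\C^*$ achieves the maximum possible minimum spacing.

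I expect the only delicate point to be the first step, namely carefully justifying that $\mbox{\minsp}(\C)$ is \emph{exactly} the smallest MST weight rather than merely a lower bound for it; this rests on the standard MST fact, provable via the cut and cycle properties cited in Section~\ref{sec:prelim}, that the lightest edge of a connected graph lies on its MST. Once that identity is established, the remainder is the elementary observation that a termwise-dominated weight sequence with an equal total sum must agree termwise, which immediately pins down the first (smallest) weight and hence the minimum spacing.
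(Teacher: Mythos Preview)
Your proof is correct and follows essentially the same approach as the paper: both rely on Lemma~\ref{lemm:aux-single-link} together with the identification $\mbox{\minsp}(\C)=w_1$, and then compare the MST weight sums of $\C^*$ and $\C_{SL}$. The only cosmetic difference is that you argue directly (equal sums plus termwise dominance force $w_1=w^{SL}_1$, invoking Theorem~\ref{teo:singlelink-mst} for the equality of sums), whereas the paper argues by contradiction (if $w^{SL}_1>w^*_1$ then $\mbox{\spmst}(\C_{SL})>\mbox{\spmst}(\C^*)$), which avoids the explicit appeal to Theorem~\ref{teo:singlelink-mst}.
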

\begin{proof}
Let us assume that   $\C^*$ 
maximizes the  \spmst \, criterion but it does not maximize the \minsp \, criterion.
Thus, $w^{SL}_1 > w^*_1$, where $w^*_1$ is the minimum spacing of $\C^*$. It follows from the previous lemma that
$$\mbox{\spmst}(\C_{SL})= \sum_{i=1}^{k-1} w^{SL}_i > \sum_{i=1}^{k-1} w^*_i = \mbox{\spmst} (\C^*),$$
which contradicts the assumption that $\C^*$ maximizes the \spmst \, criterion.
\end{proof}

The next example (in the spirit of Figure \ref{fig:clustering}) shows that 
a partition that maximizes 
the \minsp \, criterion may have a poor result in terms of the \spmst \, criterion.

\begin{example}
\label{example1}
Let $D$ be a positive number much larger than $k$. Moreover,  let
$[t]$ be the set of the $t$ first positive integers and
$S=\{(D\cdot i,j)| i,j   \in [k-1]\} \cup (D,k) $ 
be a set of $(k-1)^2+1$ points in $\mathbb{R}^2$.

\singlelink \, builds a $k$-clustering with \minsp \, 1 and  \spmst $1+(k-2)D$ for $S$. 

However, the $k$-clustering $(C_1,\ldots,C_k)$,
where $C_j = \{(D \cdot i,j ) | i=1,\ldots,k-1 \}$, for $j<k$
and $C_k=\{(D,k)\}$ has \minsp \, $1$ and \spmst$=(k-1)$,  

\end{example}

\section{Avoiding small groups}
\label{sec:small-groups}
In this section, we optimize our criteria
under the constraint that
all groups must have at least $L$ points, where
$L$ is a positive integer not larger than $n/k$ that is provided  by the user. Note that the problem is not feasible if $L>n/k$.

We say that an algorithm has $(\beta,\gamma)$-approximation for a criterion $\kappa \in \{\minsp, \spmst\}$ if
for all instances $I$ it obtains a clustering
$\C_I$ such that $\C_I$ is a $(k, \lfloor \beta L \rfloor )$-clustering and the value of $\kappa$ for 
$\C_I$ is at least
$  \gamma \cdot OPT$, where $OPT$ is the maximum possible value of
 $\kappa$ that can be achieved for a $(k,L)$-clustering.
 
We first show  how to obtain a $(1-\epsilon,1) $
for   the \minsp \, criterion.

\subsection{The \minsp \, criterion}

We start with the polynomial-time approximation scheme for the \minsp \, criterion.
Our method uses, as a subroutine, 
an algorithm for the max-min scheduling 
problem with identical machines
\cite{DBLP:journals/orl/CsirikKW92,DBLP:journals/orl/Woeginger97}.
Given $m$ machines and a set of  $n$ jobs, with 
processing times $p_1,\ldots,p_n$,
the problem consists of finding an assignment
of jobs to the machines so that the load
of the machine with minimum load is maximized.
This problem admits a polynomial-time approximation scheme
\cite{DBLP:journals/orl/Woeginger97}.

Let {\tt MaxMinSched}($P,k,\epsilon$) 
be a routine that implements this scheme. It receives as input
a
parameter $\epsilon>0$, an integer $k$ and a list  of numbers $P$ (corresponding to processing times). Then, it  returns a partition of $P$
 into $k$ lists (corresponding to machines) such that
the sum of the numbers of the list with minimum sum 
is at least $(1-\epsilon)OPT$, where $OPT$
is the minimum load of a machine in an  optimal solution of 
for the max-min scheduling 
when the list of processing
times is $P$ and the number of machines is $k$.

Algorithm \ref{alg:constrained-mispacing},
as proved in the next theorem, obtains a $(k,L(1-\epsilon))$-clustering
whose \minsp \, is at least the \minsp \,
of an optimal $(k,L)$-clustering.
For that, it looks for the largest integer $t$
for which the clustering  $\A_t$
obtained by executing $t$ steps of \singlelink \, and then
combining the resulting groups into $k$ groups (via {\tt MaxMinSched})  is a  $(k,L(1-\epsilon))$-clustering.
We assume that {\tt MaxMinSched}, in addition of
returning the partition of the sizes,
also returns the group associated to each
size.

\begin{algorithm}[H]
  \caption{{\tt AlgoMinSp}($\X$; $\dist$; $k$; $\epsilon>0$;$L$)}
  
  \begin{algorithmic}[]

\STATE $t \leftarrow n-k$
\WHILE{$t \ge 0$ }
\STATE Run $t$ merging steps of the \singlelink \, for
input $\X$

\STATE Let $C_1,\ldots,C_{n-t}$ the groups obtained
by the end of the  $t$ steps

\STATE  $P \gets (|C_1|,\ldots,|C_{n-t}|)$

\STATE $A_{t} \gets $
{\tt MaxMinSched}($P,k,\epsilon$)

\IF {the smallest group in $A_t$ has size
greater than or equal to $L(1-\epsilon)$}
\STATE Return $A_{t}$
\ELSE
\STATE $t \gets t-1$

\ENDIF
\ENDWHILE

  \end{algorithmic}
  \label{alg:constrained-mispacing}
\end{algorithm}

\begin{theorem}
Fix $\epsilon>0$. The clustering $\A_t$ returned by the Algorithm
\ref{alg:constrained-mispacing} is a $(k,(1-\epsilon)L)$-clustering that
satisfies $ \minsp (\A_t) \ge \minsp(\C^*)$,
where $\C^*$ is the $(k,L)$-clustering
with maximum \minsp.
\label{thm:ContrainedMinSP}
\end{theorem}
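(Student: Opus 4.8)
The plan is to prove the two assertions of the statement separately: first that the algorithm terminates and returns a clustering satisfying the size constraint, i.e. a genuine $(k,\lfloor(1-\epsilon)L\rfloor)$-clustering, and second that the returned clustering satisfies $\minsp(\A_t)\ge\minsp(\C^*)$. For the feasibility part I would argue that the while loop always reaches a successful return, and in fact no later than $t=0$. When $t=0$ the groups $C_1,\dots,C_n$ are the $n$ singletons, so \texttt{MaxMinSched} is run on $n$ unit jobs and $k$ machines; its optimal min-load is $\lfloor n/k\rfloor$, which is at least $L$ since $L$ is a positive integer with $L\le n/k$. Hence the returned min-load is at least $(1-\epsilon)\lfloor n/k\rfloor\ge(1-\epsilon)L$, so the stopping condition holds at $t=0$. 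Because group sizes are integers, ``size $\ge(1-\epsilon)L$'' already forces ``size $\ge\lfloor(1-\epsilon)L\rfloor$'', and each of the $k$ bins is nonempty (as $(1-\epsilon)L>0$), so the returned $\A_t$ is a $(k,\lfloor(1-\epsilon)L\rfloor)$-clustering.

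For the spacing guarantee, which is the heart of the argument, let $s:=\minsp(\C^*)$ and let $\mathcal D$ be the clustering whose groups are the connected components of the graph on $\X$ that uses only the edges of weight strictly less than $s$. The key observation is that $\C^*$ is a coarsening of $\mathcal D$: any two points at distance $<s$ must lie in the same group of $\C^*$ (otherwise $\minsp(\C^*)<s$), so by transitivity each component of $\mathcal D$ is contained in a single group of $\C^*$. Since \singlelink\ is Kruskal's algorithm with an early stop and all distances are distinct, $\mathcal D$ is exactly the clustering produced after the $m$ \singlelink\ merges whose weight is below $s$, and the $(m+1)$-th merge has weight at least $s$. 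I would then show that the algorithm's stopping condition holds at $t=m$: assigning each component of $\mathcal D$ to the machine indexed by the group of $\C^*$ that contains it produces a \texttt{MaxMinSched} solution in which every machine has load equal to a group size of $\C^*$, hence at least $L$; therefore $OPT\ge L$ and \texttt{MaxMinSched} returns min-load $\ge(1-\epsilon)L$. Since the algorithm returns at the largest feasible value of $t$, the returned index $t^\ast$ satisfies $t^\ast\ge m$. I expect this coarsening-to-scheduling reduction to be the main obstacle, since it is where the structure of the optimal constrained clustering has to be turned into a concrete feasibility certificate for the subroutine.

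Finally I would combine the pieces. The \minsp\ of the base clustering $C_1,\dots,C_{n-t^\ast}$ equals the weight of the next \singlelink\ merge, namely the $(t^\ast+1)$-th one, because the cheapest edge between two distinct components is precisely the edge Kruskal adds next; as merge weights are nondecreasing and $t^\ast\ge m$, this weight is at least the weight of the $(m+1)$-th merge, which is at least $s$. Moreover, combining the base groups into the $k$ super-groups of $\A_{t^\ast}$ can only remove cross-group pairs from the minimum, so $\minsp(\A_{t^\ast})\ge\minsp(C_1,\dots,C_{n-t^\ast})\ge s=\minsp(\C^*)$, which is the desired conclusion.
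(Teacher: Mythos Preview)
Your proof is correct and uses the same key mechanism as the paper: the optimal $(k,L)$-clustering $\C^*$ coarsens the \singlelink\ clustering obtained after all merges of weight below $s=\minsp(\C^*)$, and this coarsening furnishes a scheduling solution with min-load at least $L$, certifying that the stopping condition holds at that level. The paper packages this as a one-step contradiction (if $\minsp(\A_t)<s$ then each $C_i$, and also $C_1\cup C_2$, lies in a single group of $\C^*$, so the condition would already have held at $t+1$), whereas you argue directly by naming the threshold $m$ and concluding $t^\ast\ge m$; the two arguments are equivalent, and your explicit treatment of termination at $t=0$ is a detail the paper leaves implicit.
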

\begin{proof}
By design, $\A_t$ has $k$ groups with at least $L(1-\epsilon)$ points in each of them.

For the sake of contradiction, 
let us assume that $\minsp(\A_t) < \minsp(\C^*)$.
 
Let   $\C=(C_1,\ldots,C_{n-t})$ be the
 list of $n-t$ groups
 obtained when $t$ merging steps of \singlelink \, are 
performed.
We assume w.l.o.g. that
 $C_1$ and $C_2$
are the two  groups with a minimum spacing in this list, so
that $\minsp(\C) = {\tt spacing}(C_1,C_2)$.
Since $\A_t$ is a $k$-clustering that is obtained
by merging groups in $\C$ we have
  $\minsp(\A_t) \ge \minsp(\C) = {\tt spacing}(C_1,C_2)$.
 
 For $i=1,\ldots, n-t$  we have that
 $C_i \subseteq C$ for some $C \in \C^*$,
 otherwise we would have  $\minsp(\A_t) \ge \minsp(\C^*)$. In addition,
 we must have $C_1 \cup C_2 \subseteq C$ 
 for some $C \in \C^*$, 
 otherwise, again, we would have  $\minsp(\A_t) \ge \minsp(\C^*)$.

We can conclude that there is a feasible solution with 
minimum load not smaller than $L$ for
the max-min scheduling problem with processing times
$P'=(|C_1 \cup C_2|,|C_3|,\ldots,|C_{n-t}|)$ and $K$ machines.
Thus, by running 
 $t+1$ steps of  \singlelink \, followed by 
 {\tt MaxMinSched}$(P',k,\epsilon)$,  we  
would  get
a $k$-clustering whose smallest group
has at least $L(1-\epsilon)$ points.
This implies that the  algorithm  would
have stopped after performing $t+1$
merging steps, which is a contradiction.
 \end{proof}

Algorithm \ref{alg:constrained-mispacing}, as presented, may run  
\singlelink \, $n-k$ times, which may be  quite expensive.
Fortunately, it  admits an implementation
that runs \singlelink \, just once, and performs an inexpensive binary search to find a suitable $t$. 

The next theorem shows that Algorithm \ref{alg:constrained-mispacing}
has essentially tight guarantees under the hypothesis that
$P \ne NP$. The proof can be found in the appendix

\begin{theorem} Unless $P=NP$, for any $\alpha =poly(n)$, the problem of 
finding the $(k,L)$-clustering
that maximizes the \minsp \, criterion does not
admit a $(1,\frac{1}{\alpha})$-approximation.
\label{thm:complexity}
 \end{theorem}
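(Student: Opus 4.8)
The plan is to give a gap-introducing reduction from a \emph{strongly} NP-complete partitioning problem, namely 3-Partition, engineering the distances so that the optimal \minsp \, of a $(k,L)$-clustering jumps between a large value $D$ and the value $1$ according to whether the source instance is a yes- or a no-instance. Since a $(1,\tfrac{1}{\alpha})$-approximation must respect the hard size constraint (at least $L$ points per group) while certifying that its \minsp \, is at least $OPT/\alpha$, choosing $D>\alpha$ will let such an algorithm decide 3-Partition in polynomial time.

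Recall 3-Partition: given $3m$ positive integers $a_1,\dots,a_{3m}$, each strictly between $B/4$ and $B/2$ and summing to $mB$, decide whether the indices can be split into $m$ triples each summing to $B$. This problem is strongly NP-complete, so we may assume $B=poly(m)$. Given such an instance I would build $(\X,\dist)$ as follows. For each $i$ create a clique $Q_i$ of $a_i$ points, set $\dist(x,y)=1$ for every pair inside the same $Q_i$, and $\dist(x,y)=D$ for every pair lying in distinct cliques, where $D$ is any integer exceeding $\alpha(n)$ (still polynomial in $n$). Finally put $k=m$ and $L=B$. Then $n=\sum_i a_i=mB=poly(m)$, so the reduction is polynomial.

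The heart of the argument is the following dichotomy. A $(m,B)$-clustering has $\minsp=D$ precisely when no distance-$1$ pair is separated, i.e. when each clique $Q_i$ lies entirely inside a single group; otherwise two groups share a distance-$1$ pair and the minimum spacing is $1$. If every clique stays whole, the $m$ groups partition the cliques into sets whose total sizes are each at least $B$; as the overall total is $mB$, each group sums to exactly $B$, and since $a_i\in(B/4,B/2)$ each group consists of exactly three cliques -- an explicit 3-Partition solution. Conversely, a 3-Partition solution yields such a clustering. Because $n=mB$, a feasible $(m,B)$-clustering always exists (split the points into $m$ groups of size $B$), so $OPT$ is well defined, equalling $D$ on yes-instances and $1$ on no-instances.

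It remains to derive the contradiction. Suppose a $(1,\tfrac{1}{\alpha})$-approximation existed; run it on the constructed instance to obtain a $(m,B)$-clustering $\C_I$ with $\minsp(\C_I)\ge OPT/\alpha$. On a yes-instance $OPT=D$, hence $\minsp(\C_I)\ge D/\alpha>1$, and since every spacing is either $1$ or $D$ this forces $\minsp(\C_I)=D$; on a no-instance $\minsp(\C_I)\le OPT=1$. Thus testing whether $\minsp(\C_I)>1$ decides 3-Partition in polynomial time, contradicting $P\neq NP$. The main obstacle is keeping the reduction polynomial: a weakly NP-complete partition problem would allow the item sizes, and hence $n=\sum_i a_i$, to grow exponentially, so it is essential to start from the strongly NP-complete 3-Partition so that $B$, and therefore $D$ and $n$, stay polynomially bounded. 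A minor point, to respect the paper's convention that all distances are distinct, is to perturb the two values within tiny disjoint windows $[1,1+\delta]$ and $[D,D+\delta]$ with $D/\alpha>1+\delta$, which preserves the gap and lets one instead test $\minsp(\C_I)>1+\delta$.
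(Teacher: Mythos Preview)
Your proof is correct and follows essentially the same route as the paper: a gap-introducing reduction from strongly NP-complete 3-Partition, building one clique per item with intra-clique distance $1$ and inter-clique distance large (the paper uses $\alpha+1$, you use $D>\alpha$), setting $k=m$ and $L=B$, and observing that an exact-size $(1,\tfrac{1}{\alpha})$-approximation would distinguish \minsp $=D$ from \minsp $=1$ and hence decide 3-Partition. Your write-up is slightly more careful on two minor points the paper glosses over---explicitly noting that a feasible $(m,B)$-clustering always exists, and handling the distinct-distances convention via perturbation---but the core argument is the same.
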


\subsection{The \spmst \, criterion}

Now, we turn to the   \spmst \, criterion.
Let $\rho := \min\{\frac{n/k}{L} ,2 \}$.
Our main  contribution is Algorithm \ref{alg:constrained-MST-SP-2},
it obtains a   $(\frac{\rho(1-\epsilon)}{2}, \frac{1}{H_{k-1}})$ approximation for this criterion, where
 $H_{k-1}=\sum_{i=1}^{k-1} \frac{1}{i}$ is the $(k-1)$-th Harmonic number. Note that $H_{k-1}$ is $\Theta(\log k)$.

In high level, for each $\ell=2,\ldots,k$,
the algorithm  calls {\tt AlgoMinSp} (Algorithm  \ref{alg:constrained-mispacing}) to build a clustering 
${\cal A}'_{\ell}$
with $\ell$ groups and 
then it transforms ${\cal A}'_{\ell}$  (lines
5-13)  into a clustering
${\cal A}_{\ell}$ with $k$ groups. 
In the end, it returns
the clustering, among the $k-1$ considered, with maximum \spmst.

\begin{algorithm}[H]
  \caption{{\tt Constrained-MaxMST}($\X$; $\dist$; $k$; $L$; $\epsilon$)}
  
  \begin{algorithmic}[1]

 \FOR{ $\ell=2,\ldots,k$}
\STATE ${\cal A}'_{\ell} \gets$ {\tt AlgoMinSp}($\X$,$\dist$,$\ell$,$\epsilon$,$L$)
\STATE  {\tt NonVisited} $\gets {\cal A}'_{\ell}$
\STATE ${\cal A}_{\ell} \gets \emptyset$
\FOR{each $A'$ in ${\cal A}'_{\ell}$, iterating from the largest group to the smallest} \label{line-innerfor} 
\STATE {\tt NonVisited} $\gets$ {\tt NonVisited} - $A'$ 
\medskip
\STATE  {\tt SplitNumber} $\gets \left \lfloor \frac{2|A'|}{\rho (1-\epsilon)L} \right \rfloor $ 
\IF{ $|\A_{\ell}|$+ | {\tt NonVisited} | 
+  {\tt SplitNumber}
$< k$}

\STATE Split $A'$ into  {\tt SplitNumber} as balanced as possible groups and add them to ${\cal A}_{\ell}$

\ELSE

\STATE  Split $A'$ into $k-|\A_{\ell}| -$ | {\tt NonVisited} | \, as balanced as possible groups; add them to ${\cal A}_{\ell}$
\STATE Add all groups in {\tt NonVisited} to ${\cal A}_{\ell}$
\STATE Break

\ENDIF
\ENDFOR

\ENDFOR
\STATE {\bf Return} the clustering $A_{\ell}$, among the $k-1$ obtained, that has the maximum
\spmst
\end{algorithmic}
  \label{alg:constrained-MST-SP-2}
\end{algorithm}

We remark that 
we do not  need to scan the groups in $\A'_{\ell}$ by non-increasing order of their sizes
to establish our guarantees presented below. However, this rule tends to avoid groups with sizes smaller than $L$.

\begin{lemma}
Fix $\epsilon>0$. Thus, for each $\ell$, every group in $\A_{\ell}$ has
at least
$\left \lfloor \frac{ \rho (1-\epsilon) L}{ 2} \right \rfloor$ points.
\label{lem:LRestriction}
\end{lemma}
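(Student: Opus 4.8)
The plan is to classify the groups of $\A_{\ell}$ by how Algorithm \ref{alg:constrained-MST-SP-2} creates them and to verify the size bound in each case. Every group of $\A_{\ell}$ is either (i) a group of $\A'_{\ell}$ that is added without being split (this happens only in line 12, via {\tt NonVisited}), or (ii) one of the parts produced when some group $A' \in \A'_{\ell}$ is split, in line 9 or line 11. The whole argument rests on two facts I would record first. By Theorem \ref{thm:ContrainedMinSP} applied with $\ell$ groups, $\A'_{\ell}$ is an $(\ell,(1-\epsilon)L)$-clustering, so every $A' \in \A'_{\ell}$ satisfies $|A'| \ge (1-\epsilon)L$; and by definition $\rho = \min\{(n/k)/L,\,2\} \le 2$, hence $\tfrac{\rho(1-\epsilon)L}{2} \le (1-\epsilon)L$.

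Type (i) is then immediate: an unsplit group has at least $(1-\epsilon)L \ge \tfrac{\rho(1-\epsilon)L}{2}$ points, and since group sizes are integers this is at least $\big\lfloor \tfrac{\rho(1-\epsilon)L}{2}\big\rfloor$. For type (ii) the key observation is that the number $s$ of parts into which $A'$ is split always satisfies $s \le \tfrac{2|A'|}{\rho(1-\epsilon)L}$. Indeed, in line 9 we have $s = {\tt SplitNumber} = \big\lfloor \tfrac{2|A'|}{\rho(1-\epsilon)L}\big\rfloor$, while in line 11 we have $s = k-|\A_{\ell}|-|{\tt NonVisited}|$, which is at most {\tt SplitNumber} precisely because the {\tt else} branch is entered when $|\A_{\ell}|+|{\tt NonVisited}|+{\tt SplitNumber}\ge k$. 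From $s \le \tfrac{2|A'|}{\rho(1-\epsilon)L}$ I get $\tfrac{|A'|}{s} \ge \tfrac{\rho(1-\epsilon)L}{2}$; since splitting $|A'|$ points into $s$ as-balanced-as-possible groups yields a smallest part of size $\lfloor |A'|/s\rfloor$, monotonicity of the floor gives $\lfloor |A'|/s\rfloor \ge \big\lfloor \tfrac{\rho(1-\epsilon)L}{2}\big\rfloor$, as required.

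Two well-definedness checks remain, and this bookkeeping is where I expect the only real care to be needed, since the size bound itself is just the monotone-floor estimate above. I must confirm that each split uses a positive number of parts. For line 9 this follows from $|A'|\ge(1-\epsilon)L$ and $\rho\le2$, which force $\tfrac{2|A'|}{\rho(1-\epsilon)L}\ge \tfrac{2}{\rho}\ge 1$ and hence ${\tt SplitNumber}\ge 1$. For line 11 I must show $k-|\A_{\ell}|-|{\tt NonVisited}| \ge 1$, i.e.\ that $|\A_{\ell}|+|{\tt NonVisited}| \le k-1$ holds each time a group is processed (just after its removal from {\tt NonVisited} in line 6). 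This I would establish as a loop invariant: it holds at the first iteration because then $|\A_{\ell}|=0$ and $|{\tt NonVisited}| = \ell-1 \le k-1$, and it is preserved by each execution of the {\tt if} branch, since that branch is taken exactly when $|\A_{\ell}|+{\tt SplitNumber}+|{\tt NonVisited}|<k$ and the update makes the new value of $|\A_{\ell}|+|{\tt NonVisited}|$ equal to this quantity (removal of the next $A'$ in line 6 only decreases it further). With both splits well defined and the bound verified for types (i) and (ii), every group of $\A_{\ell}$ has at least $\big\lfloor \tfrac{\rho(1-\epsilon)L}{2}\big\rfloor$ points.
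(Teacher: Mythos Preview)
Your proof is correct and follows essentially the same case split as the paper (groups added unsplit in line~12 versus parts produced by a split in line~9 or~11), with the same floor estimate $\lfloor |A'|/s\rfloor \ge \lfloor \rho(1-\epsilon)L/2\rfloor$ once $s\le 2|A'|/(\rho(1-\epsilon)L)$ is established. You are more careful than the paper about the splits being well defined (${\tt SplitNumber}\ge1$ and the loop invariant $|\A_\ell|+|{\tt NonVisited}|\le k-1$); conversely, the paper's proof also shows that $|\A_\ell|=k$ by arguing the loop must reach the {\tt else} branch, a fact not stated in the lemma but used in Theorem~\ref{thm:bound-contrained-mstsp}.
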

\begin{proof}
 The groups that are  added to ${\cal A}_\ell$ in line 12 have at least $(1-\epsilon)L$ points while the number of points of those that are added at  either line 11 or 9 is at least
$$\left \lfloor \frac{|A'|}{ \lfloor 2 |A'|   /  \rho (1-\epsilon) L \rfloor}  \right \rfloor \ge 
\left \lfloor \frac{ \rho (1-\epsilon) L}{ 2} \right \rfloor$$   
Moreover,  if the {\bf For} is
not interrupted by the {\bf Break} command, 
the total number of groups in ${\cal A}_\ell$
is 
$$\sum_{ A' \in {\cal A}'_{\ell}} \left \lfloor \frac{2 |A'|}{  \rho (1-\epsilon) L} \right \rfloor 
\geq \sum_{ A' \in {\cal A}'_{\ell}} 
 \frac{2 |A'|}{  \rho (1-\epsilon)L} - \ell  \ge \frac{2n }{\rho (1-\epsilon) L} -k  
\ge \frac{2 k }{(1-\epsilon)} -k 
\ge k$$
Since the {\bf For} is interrupted as soon as $k$
groups can be obtained then,  ${\cal A}_\ell$ has
$k$ groups.
\end{proof}

For the next results, we use  $\C^*$ to denote the $(k,L)$-clustering
with maximum \spmst \, and  $w^*_i$ to denote the cost of the 
$i$-th cheapest edge in the MST for  $G_{\C^*}$.
Our first lemma can be seen as a generalization of Theorem \ref{thm:ContrainedMinSP}. Its proof can be found in the appendix.

\begin{lemma}
For each 
$\ell$, \minsp (${\cal A}'_{\ell}$) $\ge w^*_{k- \ell +1} $. 
\label{lem:uppbound09May}
\end{lemma}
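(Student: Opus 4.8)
### Proof Strategy for Lemma~\ref{lem:uppbound09May}

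The plan is to show that the clustering $\A'_\ell$ produced by {\tt AlgoMinSp} with $\ell$ groups has minimum spacing at least as large as the $(k-\ell+1)$-th cheapest edge of the MST for the optimal $(k,L)$-clustering $\C^*$. First I would invoke Theorem~\ref{thm:ContrainedMinSP}: since {\tt AlgoMinSp} is called with parameters $(\X,\dist,\ell,\epsilon,L)$, the returned clustering $\A'_\ell$ is a $(\ell,(1-\epsilon)L)$-clustering whose minimum spacing is at least that of the \emph{optimal $(\ell,L)$-clustering}. So the task reduces to comparing the optimal $(\ell,L)$-clustering's minimum spacing against the quantity $w^*_{k-\ell+1}$ coming from the finer optimal $(k,L)$-clustering.

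The key idea is to construct, from $\C^*$, a feasible $(\ell,L)$-clustering whose minimum spacing is exactly $w^*_{k-\ell+1}$, thereby lower-bounding the optimum over all $(\ell,L)$-clusterings. Consider the graph $G_{\C^*}$ on the $k$ groups of $\C^*$ and its MST, whose edge weights are $w^*_1 < \cdots < w^*_{k-1}$. I would contract (merge) the groups joined by the $\ell-1$ cheapest MST edges, namely those of weight $w^*_1,\ldots,w^*_{\ell-1}$; since the MST is a tree, removing its $\ell-1$ cheapest edges leaves exactly $\ell$ connected components, and merging the $\C^*$-groups within each component yields a partition $\C'$ of $\X$ into $\ell$ groups. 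Each group of $\C'$ is a union of $\C^*$-groups, so every group has at least $L$ points, making $\C'$ a valid $(\ell,L)$-clustering. The spacing between any two groups of $\C'$ corresponds to crossing a cut in $G_{\C^*}$; by the cut property of MSTs, the minimum such spacing is realized by the cheapest edge of the MST that crosses between two components, which is precisely the $\ell$-th cheapest MST edge, i.e.\ $w^*_\ell$. Hence $\minsp(\C') = w^*_\ell$.

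I would then chain the inequalities:
$$\minsp(\A'_\ell) \ge \minsp(\text{optimal }(\ell,L)\text{-clustering}) \ge \minsp(\C') = w^*_\ell.$$
This gives $\minsp(\A'_\ell) \ge w^*_\ell$, which is actually stronger than the claimed bound $w^*_{k-\ell+1}$ only when $\ell \le k-\ell+1$; since the weights increase with index, I must be careful about which direction the lemma intends. Re-examining the indexing, the correct construction removes the $k-\ell$ \emph{most expensive} MST edges (weights $w^*_{\ell},\ldots,w^*_{k-1}$) rather than the cheapest ones, so that the surviving tree of $\ell$ components is held together by the cheap edges and the cheapest \emph{severed} connection has weight $w^*_{\ell}$; alternatively one keeps the $\ell-1$ cheapest and the resulting minimum spacing across the partition equals $w^*_{k-\ell+1}$ after reindexing the cut-crossing edges. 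The main obstacle, and the step I would verify most carefully, is this bookkeeping: precisely identifying which MST edge weight equals the minimum spacing of the contracted $\ell$-clustering, and confirming via the cut property that no cheaper cross-group distance exists. The feasibility ($\ge L$ per group) is immediate from taking unions of $\C^*$-groups, so the entire weight of the argument sits in the correct application of the MST cut property to pin down the value $w^*_{k-\ell+1}$.
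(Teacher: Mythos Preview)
Your overall strategy is exactly the paper's: build a feasible $(\ell,L)$-clustering from $\C^*$ by merging groups along MST edges, observe its minimum spacing is $w^*_{k-\ell+1}$, and then invoke Theorem~\ref{thm:ContrainedMinSP} to conclude $\minsp(\A'_\ell)\ge\minsp(\text{opt }(\ell,L))\ge w^*_{k-\ell+1}$. So the route is right; the gap is purely in the edge bookkeeping you yourself flagged.

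Both of your candidate constructions are off. Removing the $\ell-1$ \emph{cheapest} MST edges does give $\ell$ components, but then those removed edges are precisely the ones crossing between components, so the minimum spacing of the resulting clustering is at most $w^*_1$, not $w^*_\ell$ (the remaining edges $e_\ell,\ldots,e_{k-1}$ lie \emph{inside} components, not across them). Your second attempt, removing the $k-\ell$ most expensive edges, yields $k-\ell+1$ components rather than $\ell$. The correct move (and the paper's) is to remove the $\ell-1$ \emph{most expensive} MST edges, those of weight $w^*_{k-\ell+1},\ldots,w^*_{k-1}$. This gives exactly $\ell$ components; each is a union of $\C^*$-groups and hence has at least $L$ points. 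Every tree edge crossing between components is a removed edge, so has weight $\ge w^*_{k-\ell+1}$; and for any non-tree edge $e$ of $G_{\C^*}$ crossing between two components, the cycle property forces $w(e)$ to dominate every edge on the tree path between its endpoints, which must include at least one removed edge. Hence $\minsp(\C^*_\ell)\ge w^*_{k-\ell+1}$ (in fact equality holds, but only the inequality is needed), and the chain of inequalities goes through.
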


A simple consequence of the previous lemma is that 
the  \spmst\, of  clustering ${\cal A}'_\ell$ is at least
 $ (\ell-1) \cdot  w^*_{k-\ell+1}.$ The next lemma
 shows that this bound also holds for 
 the clustering ${\cal A}_\ell$. The proof
 consists of showing that each edge of a MST for
 ${\cal A}'_\ell$  is also an edge of a MST for ${\cal A}_\ell$.

\begin{lemma} For each $\ell=2,\ldots,k$ we have
 \spmst(${\cal A}_\ell$) $\ge (\ell-1) \cdot 
w^*_{k-\ell+1}.$
\label{lem:lowbound-15may} 
\end{lemma}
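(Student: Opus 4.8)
The plan is to reduce the claim to the consequence of Lemma~\ref{lem:uppbound09May} already noted in the text, namely $\spmst(\A'_\ell) \ge (\ell-1)\, w^*_{k-\ell+1}$, by proving the purely structural inequality $\spmst(\A_\ell) \ge \spmst(\A'_\ell)$. The key observation is that $\A_\ell$ is a refinement of $\A'_\ell$: every group of $\A_\ell$ is a subset of exactly one group of $\A'_\ell$, since in Algorithm~\ref{alg:constrained-MST-SP-2} each $A'$ is only split, never merged. So I would fix a minimum spanning tree $T'$ of $G_{\A'_\ell}$ and argue that each of its $\ell-1$ edges reappears, with the same weight, as an edge of the MST of $G_{\A_\ell}$; since that MST has $k-1 \ge \ell-1$ edges of nonnegative weight, summing yields $\spmst(\A_\ell) \ge w(T') = \spmst(\A'_\ell)$.

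First I would establish weight preservation. Take an edge of $T'$ joining groups $A'_a, A'_b \in \A'_\ell$, and let $(x^\star, y^\star)$ be a pair attaining $\spac(A'_a,A'_b)$. Let $B_a, B_b \in \A_\ell$ be the subgroups containing $x^\star$ and $y^\star$. Since $B_a \subseteq A'_a$ and $B_b \subseteq A'_b$, minimizing $\dist$ over the smaller product set can only increase the spacing, so $\spac(B_a,B_b) \ge \spac(A'_a,A'_b)$; conversely the pair $(x^\star,y^\star)$ lies in $B_a \times B_b$, giving $\spac(B_a,B_b) \le \dist(x^\star,y^\star) = \spac(A'_a,A'_b)$. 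Hence $\spac(B_a,B_b) = \spac(A'_a,A'_b)$.

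Next I would transfer membership using the cut property. Deleting the chosen edge from $T'$ splits the vertices of $G_{\A'_\ell}$ into $S'$ (containing $A'_a$) and $\bar{S'}$ (containing $A'_b$); as $T'$ is an MST, the cut property makes $\{A'_a,A'_b\}$ the minimum-weight edge of $G_{\A'_\ell}$ crossing $(S',\bar{S'})$. Lift this to the cut of $G_{\A_\ell}$ whose sides $S, \bar S$ consist of the subgroups of the groups in $S'$ and in $\bar{S'}$. Any edge of $G_{\A_\ell}$ crossing $(S,\bar S)$ joins subgroups $B \subseteq A'_p$ and $C \subseteq A'_q$ with $A'_p \in S'$, $A'_q \in \bar{S'}$, and by the same monotonicity $\spac(B,C) \ge \spac(A'_p,A'_q) \ge \spac(A'_a,A'_b)$. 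Together with weight preservation, $\{B_a,B_b\}$ is the strictly cheapest edge crossing $(S,\bar S)$ (distinctness of the $\dist$ values carries over to distinct edge weights, since each spacing is the distance of a point-pair that determines its edge), so the cut property places $\{B_a,B_b\}$ in the MST of $G_{\A_\ell}$ with weight $\spac(A'_a,A'_b)$.

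Finally I would collect the pieces. Applying this to all $\ell-1$ edges of $T'$ yields $\ell-1$ edges of the MST of $G_{\A_\ell}$, pairwise distinct because their weights equal the distinct weights of $T'$'s edges. As all edge weights are nonnegative and the MST of $G_{\A_\ell}$ has $k-1 \ge \ell-1$ edges, $\spmst(\A_\ell)$ is at least the sum of these $\ell-1$ weights, which equals $\spmst(\A'_\ell)$; the consequence of Lemma~\ref{lem:uppbound09May} then gives the bound. I expect the main obstacle to be the cut-lifting step: one must verify carefully that the lifted cut admits no crossing edge cheaper than $\{B_a,B_b\}$, which relies simultaneously on the weight-preservation identity and on the minimality of the deleted edge guaranteed for $T'$ by the cut property.
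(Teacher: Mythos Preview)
Your proposal is correct and follows essentially the same approach as the paper: both proofs show that every edge of the MST $T'$ of $G_{\A'_\ell}$ lifts to an edge of the MST of $G_{\A_\ell}$ with the same weight, via the cut obtained by deleting that edge and refining it according to the split $\A'_\ell \to \A_\ell$. The only cosmetic difference is that, to certify minimality of the lifted edge across the refined cut, the paper argues by contradiction using the cycle property (a cheaper crossing edge would yield a cheaper crossing edge in $G_{\A'_\ell}$, creating a cycle in which $e'$ is heaviest), whereas you argue directly via monotonicity of spacing under refinement together with the fact that $e'$ is already the cheapest edge across $(S',\bar{S'})$; both routes are standard and equivalent.
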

\begin{proof}
Let $T_\ell$ and $T'_\ell$ be, respectively,
the MST for $G_{{\cal A}_\ell}$ and $G_{{\cal A}'_\ell}$.  
By the previous lemma, each of the $(\ell-1)$ edges of $T'_\ell$  has cost
at least  $\minsp (\A'_{\ell}) \ge w^*_{k-\ell+1}$.
Thus, to establish the result, 
it is enough to argue that each edge of 
 $T'_\ell$ also  belongs to 
$T_\ell$.

We say that a group $A \in {\cal A}_\ell$ is
is generated from a group $A' \in {\cal A}'_\ell$
if $A=A'$ or 
$A$ is one of the balanced groups that is generated when $A'$ is split in
the internal {\bf For} of Algorithm \ref{alg:constrained-MST-SP-2}.  We say that a  vertex
$x$ in $G_{{\cal A}_\ell}$ is generated from a vertex $x'$ in $G_{{\cal A}'_\ell}$
if the group  corresponding to $x$  is generated
by the  corresponding to $x'$.

Let $e'=u'v'$ be an edge in
$T'_\ell$ and let $S'$ be
a cut in graph  $G_{{\cal A}'_\ell}$ whose
vertices are those from the  connected component of $T'_\ell \setminus e'$ that includes $u'$. 
We define the cut  $S$ of $G_{{\cal A}_\ell}$
as follows
$S=\{ x \in G_{{\cal A}_\ell} | x \mbox{ is generated 
from some } x' \in S'\}$. 

Let $u$ and $v$ be  vertices generated 
from $u'$ and $v'$, respectively, that satisfy
$w(uv)=w(u'v')$. It is enough to  show that $uv$ is the cheapest edge that crosses  $S$ since by the cut property  [\cite{DBLP:books/daglib/0015106}, Property 4.17.]
 this implies  that $uv \in T_\ell$.
We prove it by contradiction. Let us assume that there
is another edge $f=yz$ that crosses  $S$ and
has weight smaller than $w(uv)$.
Let $y'$ and $z'$ be vertices in $G_{{\cal A}'_\ell}$
that generate $y$ and $z$, respectively, and let
$f'=y'z'$. Thus, $w(f') \le w(f) < w(uv)=w(u'v')=w(e')$.
However, this contradicts 
the cycle property of MST's [\cite{DBLP:books/daglib/0015106}, Property 4.20] because it implies that
the edge with the largest weight in the cycle of
 $G_{{\cal A}'_\ell}$ comprised
by edge $f'$ and the path in $T'_\ell$ the connects
$y'$ to $z'$ belongs to the  $T'_\ell$.
\end{proof}

The next theorem is the main result of this section.

\begin{theorem}
Fix $\epsilon>0$. Algorithm
\ref{alg:constrained-MST-SP-2} is a $(\frac{(1-\epsilon)\rho}{2}, \frac{1}{H_{k-1}})$-approximation 
for the problem of finding the $(k,L)$--clustering that maximizes the \spmst \, criterion.
\label{thm:bound-contrained-mstsp}
\end{theorem}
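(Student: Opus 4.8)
The plan is to verify the two components of the $(\beta,\gamma)$-approximation guarantee separately, leaning on the two lemmas already proved. The size component, $\beta=\frac{(1-\epsilon)\rho}{2}$, is essentially free: Lemma~\ref{lem:LRestriction} already establishes that for every $\ell$ the clustering $\A_\ell$ has exactly $k$ groups, each containing at least $\left\lfloor \frac{\rho(1-\epsilon)L}{2}\right\rfloor=\lfloor\beta L\rfloor$ points. Since Algorithm~\ref{alg:constrained-MST-SP-2} returns one of the $\A_\ell$, the output is automatically a $(k,\lfloor\beta L\rfloor)$-clustering, so I would dispense with this part in a single sentence.

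For the quality component I would first combine the selection rule with Lemma~\ref{lem:lowbound-15may}. Because the algorithm returns the $\A_\ell$ of maximum \spmst, the value of the output is at least $M:=\max_{\ell=2,\ldots,k}(\ell-1)\,w^*_{k-\ell+1}$. The whole task then reduces to the purely numerical claim $M\ge OPT/H_{k-1}$, where $OPT=\spmst(\C^*)=\sum_{i=1}^{k-1}w^*_i$ and $w^*_1\le\cdots\le w^*_{k-1}$ are the MST edge weights of the optimal $(k,L)$-clustering $\C^*$.

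The heart of the argument is a harmonic-averaging bound. By definition of $M$, for each $\ell$ we have $w^*_{k-\ell+1}\le M/(\ell-1)$; re-indexing through $i=k-\ell+1$ (so that $\ell-1=k-i$ and $i$ ranges over $1,\ldots,k-1$) turns this family of per-$\ell$ guarantees into the per-edge upper bounds $w^*_i\le M/(k-i)$. Summing them yields
$$OPT=\sum_{i=1}^{k-1}w^*_i\le M\sum_{i=1}^{k-1}\frac{1}{k-i}=M\sum_{m=1}^{k-1}\frac{1}{m}=M\,H_{k-1},$$
and hence $M\ge OPT/H_{k-1}$. Chaining this with the selection-rule bound gives $\spmst(\text{output})\ge M\ge OPT/H_{k-1}$, which together with the size guarantee completes the proof.

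I do not expect any step to present a genuine obstacle, since Lemmas~\ref{lem:LRestriction} and~\ref{lem:lowbound-15may} already carry the structural weight. The only point demanding care is the re-indexing $i=k-\ell+1$ that converts the maximum $M$ into the upper bounds $w^*_i\le M/(k-i)$, together with checking that $\sum_{i=1}^{k-1}1/(k-i)$ equals $H_{k-1}$ exactly (and not an off-by-one shift); getting this right is precisely what makes the approximation factor come out as $\frac{1}{H_{k-1}}=\Theta(1/\log k)$ as claimed.
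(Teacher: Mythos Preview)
Your proposal is correct and follows essentially the same approach as the paper: invoke Lemma~\ref{lem:LRestriction} for the size guarantee, use Lemma~\ref{lem:lowbound-15may} together with the selection rule to lower-bound the output by $M=\max_{\ell}(\ell-1)w^*_{k-\ell+1}$, and then apply the harmonic-sum bound $w^*_{k-\ell+1}\le M/(\ell-1)$ summed over $\ell$ to get $OPT\le M\cdot H_{k-1}$. The only cosmetic difference is that the paper names the maximizer $\tilde{\ell}$ and works with $(\tilde{\ell}-1)w^*_{k-\tilde{\ell}+1}$ explicitly, whereas you work directly with $M$; your re-indexing check is exactly the care the argument requires.
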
 
\begin{proof}
Let $\C$  be the clustering  returned by Algorithm
\ref{alg:constrained-MST-SP-2}.
Lemma \ref{lem:LRestriction} guarantees that 
 $\C$  is a $(k,\lfloor \frac{(1-\epsilon) \rho L}{2} \rfloor)$-clustering.

Thus, we just need to argue about $\spmst(\C)$.
We have that 
$$ \spmst(\C^*)=\sum_{i=2}^k w^*_{k-i+1}$$ and,
due to Lemma \ref{lem:lowbound-15may},
$ \spmst(\C) \ge \max \{ (\ell-1) \cdot 
w^*_{k-\ell+1} | 2 \le \ell \le k\}.$

Let $\tilde{\ell}$ be the value $\ell$ that maximizes
$(\ell-1) \cdot w^*_{k-\ell+1}$.
It follows that 
$ w^*_{k-i+1} \le ((\tilde{\ell}-1)/(i-1))  w^*_{k-\tilde{\ell}+1} .$ for $i=2,\ldots,k$.
Thus,

$$ \frac{ \spmst(\C^*) }{ \spmst(\C) } \le
\frac{ \sum_{i=2}^k w^*_{k-i+1} }{( \tilde{\ell}-1) \cdot w^*_{k-\tilde{\ell} +1}}
\le \frac{  (\tilde{\ell}-1) \cdot w^*_{k-\tilde{\ell}+1} 
\cdot \sum_{i=2}^k \frac{1}{i-1}  }{ (\tilde{\ell}-1) \cdot w^*_{k-\tilde{\ell} +1}} =H_{k-1}$$

\end{proof}

We end this section by showing that 
the optimization of \spmst \, is APX-HARD (for fixed $k$)
when a hard constraint on the number of points
per group is imposed.
The proof can be found in the appendix.

\begin{theorem} Unless $P=NP$, for any $\alpha =poly(n)$,
there is no (1,$\frac{k-2}{k-1}+\frac{1}{\alpha(k-1)}$)-approximation for the problem of  finding the $(k,L)$-clustering
that maximizes the \spmst \, criterion.
\label{thm:complexity2}
 \end{theorem}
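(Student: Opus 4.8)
The plan is to prove the inapproximability by a gap-preserving reduction from a strongly NP-hard packing problem, namely 3-PARTITION, using two distance scales $W \gg \delta$. Recall that a $(1,\gamma)$-approximation must return, on every instance, a feasible $(k,L)$-clustering (each group with at least $L$ points) whose \spmst \, is at least $\gamma\cdot OPT$. I will design instances in which the optimum equals $(k-1)W$ exactly when the packing instance is a YES-instance and drops to at most $(k-2)W+\delta$ on NO-instances. Since every pairwise distance will be at most $W$, the induced graph $G_{\C}$ always has $\spmst(\C)\le (k-1)W$, so this ceiling is hard. Choosing $\delta$ slightly below $W/\alpha$ then makes the YES/NO ratio straddle the threshold $\gamma=\frac{k-2}{k-1}+\frac{1}{\alpha(k-1)}$, so that a $(1,\gamma)$-approximation would decide 3-PARTITION in polynomial time, forcing $P=NP$.

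\textbf{Reduction.} Given a 3-PARTITION instance with $3m$ integers $a_1,\dots,a_{3m}$, each in $(B/4,B/2)$ and with $\sum_j a_j=mB$, I set $k:=m$ and $L:=B$. For each integer $a_j$ I create a tight cluster of $a_j$ points, placing all points of one integer inside a ball of radius $\delta/2$ and placing distinct clusters at mutual distance $W$ (realizable as a metric, e.g.\ in $\mathbb{R}^2$; the distinct-distance assumption is restored by an infinitesimal perturbation, as the paper's results tolerate). Thus \dist \, takes only the two scales, about $\delta$ inside an integer-cluster and about $W$ between different clusters, and the total number of points is $\sum_j a_j=mB$, which is \emph{polynomial} precisely because 3-PARTITION is strongly NP-hard. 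A $(k,L)$-clustering assigns points to $k=m$ groups each of size at least $L=B$; since the sizes sum to $mB=kL$, every feasible clustering uses groups of size exactly $B$.

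\textbf{Gap analysis.} On a YES-instance, placing each integer-cluster entirely into the group of its triple yields $k$ groups of size exactly $B$ in which no integer-cluster is split; every cross-group pair then lies in distinct clusters and has distance $W$, so $\spac$ equals $W$ for all $\binom{k}{2}$ pairs and $\spmst=(k-1)W$. On a NO-instance, no assignment keeps all integer-clusters whole while meeting the size constraint, so every feasible $(k,L)$-clustering must split at least one integer-cluster across two groups; that split contributes a cross-group pair at distance $\delta$, hence a weight-$\delta$ edge in $G_{\C}$. Because the MST is free to use that cheap edge, $\spmst(\C)\le \delta+(k-2)W$ regardless of how the remaining groups are arranged. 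Taking $\delta:=W/(2\alpha)$, YES-instances give $OPT=(k-1)W$ and NO-instances give $OPT\le (k-2)W+W/(2\alpha)$; these regimes are separated by $\gamma\cdot(k-1)W=(k-2)W+W/\alpha$, which a $(1,\gamma)$-approximation exceeds only on YES-instances, yielding the decision procedure.

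\textbf{Main obstacle.} The delicate steps are the NO-case upper bound and polynomiality. For the bound I must argue carefully that splitting \emph{any} integer-cluster necessarily injects a weight-$\delta$ edge into the spacing graph and that a single such edge already caps the MST at $(k-2)W+\delta$ (using that every spacing is either about $W$ or at most $\delta$, so the minimum spanning tree will always exploit the cheap edge); crucially, the number of split clusters is irrelevant, which is what keeps the bound clean and matches the factor $\frac{k-2}{k-1}$. For polynomiality it is essential to reduce from 3-PARTITION rather than from ordinary PARTITION, so that encoding integer $a_j$ as $a_j$ actual points keeps $n=mB$ polynomial; a merely weakly NP-hard source would blow up the point set. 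I expect the NO-case argument — ruling out any clever arrangement that recovers an MST above $(k-2)W+\delta$ on a NO-instance — to be the crux of the proof.
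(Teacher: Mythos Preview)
Your proposal is correct and follows essentially the same route as the paper: a pseudo-polynomial reduction from the strongly NP-hard $(B/4,B/2)$-restricted 3-PARTITION, with each integer $a_j$ encoded as a tight cluster of $a_j$ points at small intra-cluster distance and large inter-cluster distance, yielding $\spmst=(k-1)W$ on YES-instances and at most $(k-2)W+\delta$ on NO-instances. The paper simply fixes the two scales to $1/2$ and $\alpha$ (rather than your $\delta=W/(2\alpha)$ and $W$), but the gap computation and the argument that a NO-instance forces at least one integer-cluster to be split (hence a cheap edge in $G_{\C}$) are identical.
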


\section{Experiments}
\label{sec:experiments}

To evaluate the performance of Algorithms 
\ref{alg:constrained-mispacing} and
\ref{alg:constrained-MST-SP-2}, we ran experiments with 10 different datasets, comparing the results with those of \singlelink \, and of the traditional $k$-means algorithm from \cite{lloyd1982least} with a ++ initialization \citep{arthur2006k}. 
For the implementation of  routine {\tt MaxMinSched}, employed by Algorithm \ref{alg:constrained-mispacing}, we used the Longest Processing Time rule.
This rule has the advantage of being fast while
guaranteeing a $3/4$ approximation for the max-min scheduling problem \citep{DBLP:journals/orl/CsirikKW92}.
The code for running the algorithms can be found at \url{https://github.com/lmurtinho/SizeConstrainedSpacing}.

Our first experiment investigates the size of
the groups produced by \singlelink \, for the 10 datasets, whose dimensions can be found in the first two columns
of Table \ref{tab:comparison-methods}.
Figure \ref{fig:sigle-link-at-most-2-groups} shows the proportion of singletons for each dataset with the growth of $k$.
For all datasets but {\tt Vowel} and {\tt Mice} the
  majority of groups are singletons, even
for small values of $k$.
This undesirable behavior motivates our constraint on the minimum size of a group.

  \begin{figure}[htb]
   \begin{center}
   \includegraphics[width=0.8\textwidth]{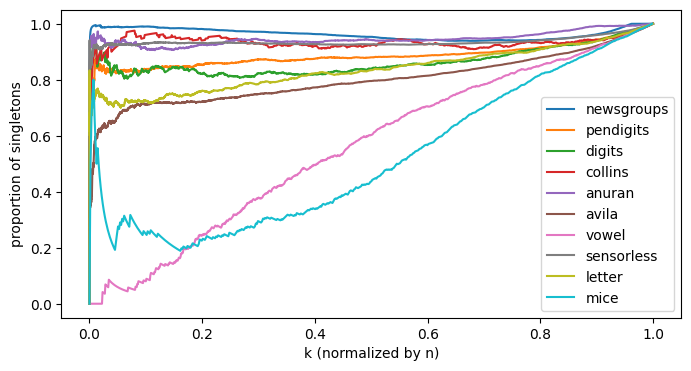}
   \end{center}
   \caption{Proportion of singletons for each dataset with the growth of $k$ }
   \label{fig:sigle-link-at-most-2-groups}
   \end{figure}

In our second experiment, we compare the values of \minsp \,  
and \spmst \, achieved by our algorithms with those
of $k$-means. While $k$-means is not a particularly
strong competitor in the sense that it was
not designed to optimize our criteria, the motivation to include it is that $k$-means is a very popular choice among
practitioners. Moreover, for datasets with well-separated groups, the minimization of the squared sum of errors (pursued by $k$-means) should also imply the maximization of inter-group criteria. 

 Table \ref{tab:comparison-methods} presents 
 the results of this experiment. 
 The  values chosen
for $k$ are the numbers of classes (dependent variable) in the datasets, while for the $\dist$ function  we employed the Euclidean distance.
 The values associated with the criteria 
  are averages of 10 executions, with each execution corresponding to a different seed provided to $k$-means.
To set the  value of $L$ for the $i$-th execution of our algorithms, we take the size of the smallest group generated by $k$-means for this execution and multiply it by $4/3$. This way, we guarantee that the size of the smallest group produced by our methods, for each execution,  is not smaller than that of $k$-means, which makes the comparison among the optimization criteria fairer.

\begin{table}[]
    \caption{\minsp \, and \spmst \, for the
    different methods and datasets.}
\label{tab:comparison-methods}

\begin{center}
\begin{tabular}{c|cc||ccc||ccc}
\toprule
\multirow{2}{*}{}  & \multicolumn{2}{c||}{Dimensions} & \multicolumn{3}{c||}{\minsp}      & \multicolumn{3}{c}{\spmst}     \\
     & n & k                & Algo \ref{alg:constrained-mispacing} & Algo \ref{alg:constrained-MST-SP-2} & k-means  & Algo \ref{alg:constrained-mispacing}  & Algo \ref{alg:constrained-MST-SP-2} & k-means \\ \midrule

anuran     & 7,195  & 10 & \textbf{0.19}  & 0.09           & 0.05  & 1.71            & \textbf{1.87}   & 1.01   \\
avila      & 20,867 & 12 & \textbf{0.07}  & 0.04           & 0     & 0.77            & \textbf{0.81}   & 0.66   \\
collins    & 1,000  & 30 & \textbf{0.42}  & \textbf{0.42}  & 0.22  & \textbf{12.42}  & \textbf{12.42}  & 8.58   \\
digits     & 1,797  & 10 & \textbf{19.74} & \textbf{19.74} & 13.79 & \textbf{178.22} & \textbf{178.22} & 145.13 \\
letter     & 20,000 & 26 & \textbf{0.2}   & 0.11           & 0.07  & 4.98            & \textbf{5.67}   & 1.98   \\
mice       & 552   & 8  & \textbf{0.79}  & \textbf{0.79}  & 0.24  & \textbf{5.66}   & \textbf{5.66}   & 2.37   \\
newsgroups & 18,846 & 20 & \textbf{1}     & 1              & 0.17  & 19              & \textbf{19}     & 8.4    \\
pendigits  & 10,992 & 10 & \textbf{23.89} & 9.08           & 8.31  & 215.11          & \textbf{217.01} & 119.85 \\
sensorless & 58,509 & 11 & \textbf{0.13}  & 0.08           & 0.03  & 1.31            & \textbf{1.36}   & 1.29   \\
vowel      & 990   & 11 & \textbf{0.49}  & \textbf{0.49}  & 0.11  & \textbf{4.94}   & \textbf{4.94}   & 1.84   \\

\bottomrule

\end{tabular}
\end{center}
\end{table}

With respect to the \minsp \, criterion,  Algorithm \ref{alg:constrained-mispacing} is at least as good
as Algorithm \ref{alg:constrained-MST-SP-2} for every dataset
(being superior on 6) and both 
Algorithm \ref{alg:constrained-mispacing} and \ref{alg:constrained-MST-SP-2}   outperform $k$-means on all datasets.
On the other hand, with respect to the \spmst \, criterion,  Algorithm \ref{alg:constrained-MST-SP-2} is at least as good as  Algorithm \ref{alg:constrained-mispacing} for every dataset (being better on 6)  and, again, both algorithms outperform $k$-means for all datasets.  In the appendix,
we present additional information regarding this experiment.
In particular, we show that the \spmst \, achieved
by Algorithm \ref{alg:constrained-MST-SP-2}
is on average 81$\%$ of the upper bound 
$\sum_{\ell=2}^{k} \minsp(\A'_{\ell})$
that follows from Lemma \ref{lem:uppbound09May}.  
This is much better than the $1/H_{k-1}$ ratio given by the theoretical bound.

Finally, Table \ref{tab:running-times} shows the  running time of our algorithms for the datasets that consumed  more time. We observe that the overhead introduced by Algorithm \ref{alg:constrained-mispacing} w.r.t. \singlelink \, is negligible while Algorithm  \ref{alg:constrained-MST-SP-2}, as expected, is more costly. In the appendix, we show that a strategy that only considers values of $\ell$ that can be written as $ \lceil k / 2^t \rceil$, for  $t=0,\ldots,\lfloor \log k \rfloor$,    in the first loop of Algorithm \ref{alg:constrained-MST-SP-2} provides a significant gain of running time while incurring a small loss in the \spmst. We note that the $\log k$ bound of Theorem \ref{thm:bound-contrained-mstsp} is still valid for this strategy.

\begin{table}[]
\caption{Running time in seconds of \singlelink \, and our methods.}
\label{tab:running-times}
\begin{center}

    \begin{tabular}{cccc}
\toprule
Dataset & \singlelink & Algo 1 & Algo 2 \\
\midrule
sensorless       & 93.8       & 99.4      & 701.4      \\
newsgroups       & 274.2         & 276.4      & 440.4      \\
letter           & 4.6        & 5.9        & 116.4    \\
avila            & 4.0        & 5.5        & 62.5        \\
pendigits        & 1.4        & 2.1        & 16.0   \\
\bottomrule
\end{tabular}
\normalsize

\end{center}
\end{table}

\section{Final Remarks}
We have endeavored in this paper to expand the current knowledge on clustering methods for optimizing inter-cluster criteria. We have proved that the well-known \singlelink \,  produces partitions that maximize not only the minimum spacing between any two clusters, but also the MST spacing, a stronger guarantee. We have also studied the task of maximizing these
criteria under the constraint that
each group of the clustering has at least
$L$ points.
We provided complexity results
and algorithms with provable approximation guarantees.

One potential limitation of our proposed  algorithms
is their usage on massive datasets (in particular Algorithm 
\ref{alg:constrained-MST-SP-2}) since
they execute \singlelink \ one or many times. 
If the $\dist$ function is explicitly given, then the
$\Omega(n^2)$ time spent by \singlelink \, is unavoidable. However, if the distances
can be calculated from the set of points $\X$ then faster
algorithms might be obtained.

The main theoretical question that remained open in our work is whether
there exist constant approximation algorithms for the maximization of \spmst. In addition to addressing this question, interesting directions for future research include handling
different inter-group measures as well as other constraints on the
structure of clustering.

\begin{ack}
The authors thank BigDataCorp (\url{https://bigdatacorp.com.br/}) for providing computational power for the experiments of an initial version of the paper.

The work of the authors is partially supported by the Air Force Office of Scientific Research (award number FA9550-22-1-0475). 

The work of the first author is partially supported by 
 CNPq (grant 310741/2021-1).
\end{ack}

\bibliography{biblio.bib}

\bibliographystyle{icml2023}

\newpage
\appendix
\onecolumn

\section{Properties of Minimum Spanning Trees}
\label{app:mst-properties}

\begin{theorem}[Cut Property, \cite{DBLP:books/daglib/0015106}, Property 4.17.]
\label{thm:cut-property}
 Let $G=(V,E)$ be a graph with distinct weights on its edges.
 Let $S \subset V$ be a non-empty cut in $G$,
 If $e$ is the edge with minimum cost
 among those that have one endpoint in $S$ and
 the other one in $V \setminus S$, then
 $e$ belongs to every MST for $G$
 \end{theorem}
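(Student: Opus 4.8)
The plan is to argue by contradiction using a standard exchange (swap) argument. Write $e=(u,v)$ for the minimum-cost edge crossing the cut, with $u \in S$ and $v \in V \setminus S$, and suppose that some MST $T$ of $G$ does not contain $e$. Since $T$ is a spanning tree it already contains a unique path $\pi$ from $u$ to $v$; adding $e$ to $T$ therefore creates a unique cycle $C$, consisting of $\pi$ together with the edge $e$.

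The crucial step is to locate a second edge of $C$ that also crosses the cut. Traversing $C$ starting at $u$, the edge $e$ moves us from $S$ into $V\setminus S$; since the traversal must eventually return to $u\in S$, it has to cross from $V\setminus S$ back into $S$ at least once. Hence $C$ contains an edge $e' \neq e$ with one endpoint in $S$ and the other in $V\setminus S$. Because $e$ is the unique cheapest edge crossing the cut and all weights are distinct, this forces $w(e) < w(e')$.

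I would then perform the swap: set $T' = (T \setminus \{e'\}) \cup \{e\}$. Removing the edge $e'$ from the cycle $C \subseteq T \cup \{e\}$ leaves a connected, acyclic subgraph spanning all of $V$, so $T'$ is again a spanning tree, and $w(T') = w(T) - w(e') + w(e) < w(T)$. This contradicts the minimality of $T$, and since $T$ was an arbitrary MST not containing $e$, we conclude that every MST must contain $e$.

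The step I expect to require the most care is the parity observation that guarantees the existence of a crossing edge $e' \neq e$ inside the cycle $C$, together with the verification that deleting $e'$ from $T \cup \{e\}$ indeed yields a spanning tree — connectivity is preserved precisely because $e'$ lies on the cycle $C$, so no vertex becomes disconnected. Once these structural facts are established, the weight comparison and the final contradiction are immediate.
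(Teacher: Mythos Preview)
Your exchange argument is correct and is exactly the standard textbook proof of the cut property (in particular, the one given in Kleinberg and Tardos, which is the reference cited here). Note that the paper itself does not supply a proof of this theorem; it merely states it and defers to the textbook, so there is nothing further to compare.
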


\begin{theorem}[Cycle Property, \cite{DBLP:books/daglib/0015106}, Property 4.20.]
\label{thm:cycle-property}
 Let $G=(V,E)$ be a graph with distinct weights on its edges
 and let $C$ be a cycle in $G$. Then,
 the edge with the largest
 weight in $C$ does not belong to any minimum spanning
 tree for $G$.
\end{theorem}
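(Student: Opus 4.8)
The plan is to argue by contradiction using a standard edge-exchange (swap) argument. Let $e=(u,v)$ be the edge of maximum weight in the cycle $C$; since all weights are distinct, $e$ is the unique heaviest edge of $C$. Suppose, for contradiction, that $e$ belongs to some minimum spanning tree $T$ of $G$. First I would delete $e$ from $T$: because $T$ is a tree, removing a single edge breaks it into exactly two connected components. Let $S\subseteq V$ be the vertex set of the component containing $u$, so that $v\in V\setminus S$. The pair $(S,V\setminus S)$ is then a cut of $G$ across which $e$ is one crossing edge.

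Next I would exhibit a second edge of $C$ crossing this same cut. Viewing $C$ as a closed walk that starts and ends at $u$, the walk begins in $S$ and returns to $S$, so it must switch sides an even number of times; since $e$ accounts for one crossing, at least one further edge $f\in C$ with $f\neq e$ has its two endpoints on opposite sides of the cut. Because $e$ is the unique maximum-weight edge of $C$, this forces $w(f)<w(e)$.

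Finally I would perform the swap. Since $f$ joins the two components of $T\setminus\{e\}$, the graph $T'=(T\setminus\{e\})\cup\{f\}$ is connected and has exactly $|V|-1$ edges, hence is again a spanning tree of $G$. Its total weight satisfies $w(T')=w(T)-w(e)+w(f)<w(T)$, contradicting the minimality of $T$. We conclude that $e$ lies in no minimum spanning tree of $G$, which is precisely the asserted statement.

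The step I expect to be the crux is the parity argument that guarantees the second crossing edge $f$: one must verify that the cycle $C$, as a closed walk, crosses the cut induced by $T\setminus\{e\}$ an even number of times, so that after accounting for $e$ there remains an edge of $C$ available to reconnect the two tree components. The distinctness of the weights is what upgrades the inequality $w(f)\le w(e)$ to the strict inequality needed for a genuine contradiction; everything else is routine tree-and-cut bookkeeping.
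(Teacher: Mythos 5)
Your proof is correct and complete: the paper itself states this Cycle Property without proof, citing Kleinberg--Tardos (Property 4.20), and your edge-exchange argument --- deleting the heaviest cycle edge $e$ from the putative MST, using the parity of cut crossings by the closed walk $C$ to find a strictly cheaper reconnecting edge $f$, and swapping to contradict minimality --- is essentially the textbook's own proof. The parity step and the use of distinct weights to get the strict inequality $w(f)<w(e)$ are both handled properly, so there is nothing to fix.
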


The following characterization of MST's will be useful.
Its correctness  follows directly from the cycle property.

\begin{theorem}
\label{thm:mst-coditions}
 Let $G=(V,E)$ be a graph with weights on its edges.
 A spanning tree $T$ for $G$ is a MST for $G$ if and only if for each edge $e=uv$ in $E$, the weight $w(e)$ of $e$ satisfies $w(e) \ge w(e')$ for every edge $e'$ in the path that connects $u$ to $v$ in $T$.
\end{theorem}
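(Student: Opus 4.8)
The plan is to prove the two implications separately, both by appealing to the cycle property (Theorem~\ref{thm:cycle-property}). Throughout I would lean on the paper's standing assumption that edge weights are distinct, so that $G$ has a \emph{unique} MST $T^*$ and strict inequalities are available; this is exactly what makes the cycle property (which speaks of the maximum-weight edge of a cycle belonging to no MST) bite cleanly.

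For the ``only if'' direction, I would assume $T$ is an MST and verify the stated condition edge by edge. Fix $e=uv\in E$. If $e\in T$, the $u$--$v$ path in $T$ is the single edge $e$ and the inequality $w(e)\ge w(e)$ is trivial. If $e\notin T$, then $e$ together with the $u$--$v$ path $P$ in $T$ forms a cycle whose non-$T$ edge is only $e$. Were some $e'\in P$ to satisfy $w(e')>w(e)$, the maximum-weight edge of this cycle would be an edge of $T$; but the cycle property says that maximum-weight edge lies in no MST, contradicting $T$ being an MST. Hence $w(e)\ge w(e')$ for all $e'$ on $P$.

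For the ``if'' direction, I would argue against the unique MST by contradiction. Suppose $T$ satisfies the condition but $T\neq T^*$. Then $T^*\setminus T$ is nonempty (both are spanning trees with the same number of edges), so I may pick $e=uv\in T^*\setminus T$. Consider the cycle formed by $e$ and the $u$--$v$ path $P$ in $T$: the assumed condition gives $w(e)\ge w(e')$ for every $e'\in P$, and since weights are distinct while $e\notin T$ forces $e\neq e'$, this sharpens to $w(e)>w(e')$ for each such $e'$. Thus $e$ is the strict maximum-weight edge of the cycle, so by the cycle property $e$ belongs to no MST, contradicting $e\in T^*$. Therefore $T=T^*$ and $T$ is the MST.

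The main obstacle is getting the orientation of the exchange right in the ``if'' part: the hypothesis is phrased via paths inside $T$, whereas the cycle property needs a cycle containing exactly one candidate non-tree edge. It is therefore essential to draw the witnessing edge from $T^*\setminus T$ (not from $T\setminus T^*$), so that all remaining cycle edges lie in $T$ and the assumed inequality applies to them. A secondary point worth a remark is the reliance on distinctness: if ties are permitted the MST need not be unique and the cycle property's ``no MST'' conclusion weakens, so the one-line uniqueness contradiction must be replaced by a standard exchange argument that swaps an edge of $T^*\setminus T$ for a non-increasing-weight path edge to reduce $|T\triangle T^*|$; the characterization itself still holds.
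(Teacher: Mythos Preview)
Your proof is correct and follows precisely the route the paper indicates: the paper states only that the characterization ``follows directly from the cycle property,'' and you have fleshed out both implications via that property in the natural way. Your handling of the ``if'' direction---drawing the witness edge from $T^*\setminus T$ so that the remaining cycle edges lie in $T$ and the hypothesis applies---and your remark on the role of distinct weights are both on target.
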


\section{Proof of  Lemma \ref{lemm:aux-single-link}}
\label{app:claim-proof}

We need the  following proposition.

\begin{proposition} Let $\C'$ be  a $k$-clustering  for instance $I$ and let $T'$ be a MST for 
$G_{\C'}$.
Moreover, let $C'_i$ and $C'_j$ be groups of
$\C'$ such that ${\mbox \spac}(C'_i,C'_j)={\mbox \minsp}(\C')$.
Then, the tree $T^{a}$ that results from  the contraction
of the nodes $C'_i$ and $C'_j$ in $T'$ is a MST
for $G_{\C^a}$, where $\C^a$ is the $(k-1)$-clustering    obtained from $\C'$ by  merging $C'_i$ and $C'_j$.
\end{proposition}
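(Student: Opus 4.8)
The plan is to first argue that the contraction is \emph{clean}, so that $T^a$ is genuinely a spanning tree of $G_{\C^a}$, and then to verify minimality through the characterization of MSTs in Theorem~\ref{thm:mst-coditions}. Since $\spac(C'_i,C'_j)=\minsp(\C')$ is the globally cheapest edge of $G_{\C'}$, it is in particular the cheapest edge crossing the cut $\{C'_i\}$; by the cut property (Theorem~\ref{thm:cut-property}) it belongs to every MST, hence to $T'$. Thus $C'_i$ and $C'_j$ are adjacent in $T'$, and contracting them merely deletes the single edge $e_{ij}$ and identifies its endpoints. The resulting $T^a$ then has $k-1$ vertices, $k-2$ edges, and is connected, so it is a spanning tree of $G_{\C^a}$. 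No parallel edges can arise, since two edges $C'_iC'_m$ and $C'_jC'_m$ together with $e_{ij}$ would form a cycle in the tree $T'$.

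The key observation is a monotonicity of weights under merging. Write $C^a=C'_i\cup C'_j$ for the merged group and let $w_{\C'}$, $w_{\C^a}$ denote edge weights in $G_{\C'}$ and $G_{\C^a}$. Every edge $e'$ of $T^a$ is the image of a unique edge $\tilde e'$ of $T'$: if $\tilde e'$ is incident to neither $C'_i$ nor $C'_j$ then $e'=\tilde e'$ with unchanged weight, while if, say, $\tilde e'=C'_iC'_m$, then $e'=C^aC'_m$ has weight $\spac(C^a,C'_m)=\min\{\spac(C'_i,C'_m),\spac(C'_j,C'_m)\}\le \spac(C'_i,C'_m)=w_{\C'}(\tilde e')$. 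Hence $w_{\C^a}(e')\le w_{\C'}(\tilde e')$ for every edge of $T^a$.

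To conclude, I would apply Theorem~\ref{thm:mst-coditions} to $T^a$: for each edge $e$ of $G_{\C^a}$ I must show that every edge on the $T^a$-path joining its endpoints is no heavier than $e$. For an edge $e=C'_mC'_n$ not touching $C^a$, the weight is the same as in $G_{\C'}$, and the $T^a$-path between $C'_m$ and $C'_n$ is the contraction image of the $T'$-path between them; applying the characterization to $T'$ for the edge $e$, together with the monotonicity above, gives the bound. For an edge $e=C^aC'_n$, let $C'_s\in\{C'_i,C'_j\}$ be the group realizing $\spac(C^a,C'_n)=\min\{\spac(C'_i,C'_n),\spac(C'_j,C'_n)\}$. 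The crucial point is that the unique $T^a$-path from $C^a$ to $C'_n$ is the contraction image of the $T'$-path from $C'_s$ to $C'_n$ (irrespective of whether that path uses $e_{ij}$, which contracts away). Applying the characterization to $T'$ for the edge $C'_sC'_n$ bounds every edge of that $T'$-path by $\spac(C'_s,C'_n)=w_{\C^a}(e)$, and monotonicity transfers the bound to the corresponding edges of $T^a$.

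I expect the main obstacle to be precisely this last case. The naive choice---routing through the representative lying on the same side of $e_{ij}$ as $C'_n$, say $C'_t$---fails, because $\spac(C^a,C'_n)$ may be achieved by the representative on the \emph{opposite} side, and the characterization applied through $C'_t$ only yields the weaker bound $\spac(C'_t,C'_n)>\spac(C'_s,C'_n)=w_{\C^a}(e)$. The fix is to always route through the minimizing group $C'_s$: when $C'_s$ sits on the opposite side of $e_{ij}$ from $C'_n$, its $T'$-path to $C'_n$ passes through $e_{ij}$ and therefore covers exactly the same path edges, but now bounded by the smaller value $\spac(C'_s,C'_n)$. Throughout, the standing assumption that all distances (hence all spacings) are distinct ensures that the relevant MSTs are unique and that the cut and cycle properties apply without ties.
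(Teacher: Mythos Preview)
Your proposal is correct and follows essentially the same approach as the paper: both verify the MST characterization of Theorem~\ref{thm:mst-coditions} for $T^a$, split into the two cases according to whether the non-tree edge is incident to the merged vertex, and in the incident case route through whichever of $C'_i,C'_j$ realizes the minimum spacing to the other endpoint. Your write-up is in fact more careful than the paper's---you explicitly justify that $e_{ij}\in T'$ (so the contraction yields a tree) and spell out the weight monotonicity $w_{\C^a}(e')\le w_{\C'}(\tilde e')$, both of which the paper uses implicitly.
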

\begin{proof}
We show that $T^{a}$ satisfies the conditions of 
 Theorem \ref{thm:mst-coditions} when
$G=G_{\C^{a}}$. For that, we will use the fact 
that $T'$  satisfies the conditions of
 Theorem \ref{thm:mst-coditions} when
 $G=G_{\C'}$.

Let $x$ and $y$ be nodes of $T^{a}$.
For the sake of contradiction, we assume that
edge $xy$ does not satisfy the conditions of 
 Theorem \ref{thm:mst-coditions} when
$G=G_{\C^{a}}$ and $T=T^a$. Let $w^*$ be the weight of edge $xy$ and let $e$ be an edge in the path that connects $x$ to $y$ in $T^a$ such that $w(e) > w^*$.  
We have two cases: 
\medskip

Case 1) $x \neq C'_i \cup C'_j$ and
$y \neq C'_i \cup C'_j$. Then,  $e$ is also an edge in the path that connects
$x$ to $y$ in $T'$. This implies that 
$xy$  does not  satisfy the required conditions when $G=G_{\C'}$ and $T=T',$ which is a contradiction.

\medskip

Case 2) $x = C'_i \cup C'_j$ or
$y \neq C'_i \cup C'_j$. Let us assume w.l.o.g. that $x = C'_i \cup C'_j$. 
Let $w'_i$ and $w'_j$ be, respectively, the weights of the edges $(y,C'_i)$ and
$(y,C'_j)$ in $G_{\C'}$.
We have that  $w^*=\min \{w'_i,w'_j\}$.

Let us assume w.l.o.g. that $w^*=w'_i$.
Then, $e$ is also an edge in the path that connects
$y$ to $C'_i$ in $T'$.
Again, this implies that 
$xy$  does not  satisfy the required conditions when $G=G_{\C'}$ and $T=T',$ which is a contradiction.
\end{proof}

\medskip

\noindent {\it Proof of Lemma \ref{lemm:aux-single-link}}.

It follows from the previous proposition hat 
the tree $T^{i-1}$ that is obtained by  contracting the $i-1$ cheapest edges of $T$ is a MST for $G_{\C^{i-1}}$, where 
$\C^{i-1}$ is 
a clustering for instance $I$ that contains $(k-(i-1))$ groups. 
The cheapest edge  of $T^{i-1}$ is exactly the $i$-th cheapest edge of $T$. Thus,   $\mbox{\minsp}(\C^{i-1})=w_i$.

Similarly, $w^{SL}_i$ is  exactly the 
minimum spacing of a clustering, with $(k-(i-1)$ groups, that is obtained by \singlelink \, for instance $I$.
Thus, it follows from Theorem \ref{ref:thm-spacing}  that  
$w^{SL}_i \ge w_i$.

\section{Proofs of Section \ref{sec:small-groups}}

\subsection{Proof of  Lemma \ref{lem:uppbound09May}}
\begin{proof}
Let $T^*$ be the MST for $G_{\C^*}$.
If we remove the $\ell-1$ most expensive edges of $T^*$ we obtain a forest $F$ with $\ell$ connected components.
The clustering $\C^*_\ell$ comprised
by $\ell$ groups in which the $i$th group corresponds to 
the $i$th connected component of $F$
is a $(\ell,L)$-clustering and $\minsp(\C^*_\ell)=w^*_{k- \ell +1}$.

Let $OPT$ be the $\minsp$\, of the 
$(\ell,L)$-clustering with maximum $\minsp$.
Thus, by Theorem \ref{thm:ContrainedMinSP} $\minsp ({\cal A}'_{\ell}) \ge OPT \ge \minsp(\C^*_\ell)=w^*_{k- \ell +1}$
 \end{proof}

\subsection{Proof of  Theorem \ref{thm:complexity}}
\label{app:complexity-proof}

\begin{proof}
We make a reduction from the $(T/4, T/2)$-restricted 3-PARTITION problem. Given a
multiset $S = \{s_1,\ldots,s_{3m}\}$ of positive integers that satisfies
$\sum_i s_i=mT$, the 3-PARTITION  problem consists of deciding whether or not there exists a partition of $S$ into $m$ triples such that the
sum of the numbers in each one is equal to $T$. In the $(T/4, T/2)$-restricted 3-PARTITION problem,
 there is an additional requirement that each number of $S$ should be in the interval $(T/4, T/2)$. This
 problem is strongly NP-COMPLETE \cite{garey1979computers}

The instance $I = (X, k,L, \dist)$ for our clustering problem is built as follows: we set $L = T$,
$ k = m$; for $i = 1,\ldots, 3m$ let $\X_i$ be a set with $s_i$ points so that the distance between points in
 the same group $\X_i$ is 1 while the distance between points in different groups is $\alpha + 1$. We set
$\X = \X_1 \cup \ldots \cup \X_{3m}$. Note that we are employing a pseudo-polynomial reduction but this is fine
 since the $( T/4,T/2)$-restricted 3-PARTITION problem is strongly NP-COMPLETE.

Let us assume that there is an  $(1, 1/\alpha)$-approximation for our problem and let $\C$ be the clustering
 returned by this algorithm for instance $I$. We argue that the answer to the 3-PARTITION problem is ’YES’ if and only if 
 $\minsp(\C)=\alpha+1$.

 First, we show that if the answer is ’YES’, there is a $k$-clustering $\C^*$ for $I $ with $\minsp(\C^*)=\alpha+1$. In
fact, let $S_1,\ldots,S_m$ be a solution of the 3-PARTITION problem and let $\{s_{i_1} , s_{i_2} ,s_{i_3} \}$ the numbers
in $S_i$. Let $\C^*$ be a $k$-clustering where the $i$th group is comprised by all points in $\X_{i_1} \cup \X_{i_2} \cup \X_{i_3}$.
Clearly, each group has $L$ points and the \minsp \, of this clustering is $\alpha + 1$. Since our algorithm is a (1, $1/\alpha $)-approximation it returns a clustering 
$\C$ with \minsp($\C$) $\ge$ $(\alpha + 1)/\alpha$. 
Since the \minsp \,  of
 any clustering for instance $I$ is either 1 or $\alpha + 1$ we have that \minsp($\C$)=$\alpha + 1$

 On the other hand, if the clustering has Min-Sp $\alpha + 1$ then all points in $\X_i$, for each $i$, must be in
 the same group. Moreover, due to the restriction that every $|\X_i| = s_i \in (L/4,L/2)$, we should have
exactly 3 $\X_i$’s in each of the $m$ groups. Thus, the answer is ’YES’.
 \end{proof}

\subsection{Proof of  Theorem \ref{thm:complexity2}}
\label{app:complexity2-proof}

The proof is very similar to that of  Theorem \ref{thm:complexity}.

\begin{proof}
We make a reduction from the $(T/4, T/2)$-restricted 3-PARTITION problem. Given a
multiset $S = \{s_1,\ldots,s_{3m}\}$ of positive integers that satisfies
$\sum_i s_i=mT$, the 3-PARTITION  problem consists of deciding whether or not there exists a partition of $S$ into $m$ triples such that the
sum of the numbers in each one is equal to $T$. In the $(T/4, T/2)$-restricted 3-PARTITION problem,
 there is an additional requirement that each number of $S$ should be in the interval $(T/4, T/2)$. This
 problem is strongly NP-COMPLETE \cite{garey1979computers}

The instance $I = (X, k,L, \dist)$ for our clustering problem is built as follows: we set $L = T$,
$ k = m$; for $i = 1,\ldots, 3m$ let $\X_i$ be a set with $s_i$ points so that the distance between points in
 the same group $\X_i$ is 1/2 while the distance between points in different groups is $\alpha$. We set
$\X = \X_1 \cup \ldots \cup \X_{3m}$.

Let us assume that there is an  $(1, \frac{k-2}{k-1}+\frac{1}{\alpha(k-1)})$-approximation for our problem and let $\C$ be the clustering
 returned by this algorithm for instance $I$. We argue that the answer to the 3-PARTITION problem is ’YES’ if and only if 
 $\spmst(\C)=(k-1)\alpha$.

 First, we show that if the answer is ’YES’, there is a $k$-clustering $\C^*$ for $I $ with \spmst$(\C^*)=(k-1)\alpha$. In
fact, let $S_1,\ldots,S_m$ be a solution of the 3-PARTITION problem and let $\{s_{i_1} , s_{i_2} ,s_{i_3} \}$ the numbers
in $S_i$. Let $\C^*$ be a $k$-clustering where the $i$th group is comprised by all points in $\X_{i_1} \cup \X_{i_2} \cup \X_{i_3}$.
Clearly, each group has $L$ points and the \spmst \, of this clustering is $(k-1)\alpha $. Since our algorithm is a $(1, \frac{k-2}{k-1}+\frac{1}{\alpha(k-1)})$-approximation it returns a clustering 
$\C$ with \spmst($\C$) $>$ $ (k-2)\alpha+1$. 
Since the \spmst \,  of
 any clustering for instance $I$ is either $(k-1) \alpha$  or 
 at most $(k-2) \alpha+1/2$ 
  we have that \spmst($\C$)=$(k-1) \alpha$

 On the other hand, if the clustering has Min-Sp $(k-1) \alpha$ then all points in $\X_i$, for each $i$, must be in
 the same group. Moreover, due to the restriction that every $|\X_i| = s_i \in (L/4,L/2)$, we should have
exactly 3 $\X_i$’s in each of the $m$ groups. Thus, the answer is ’YES’.

\end{proof}

\section{Experiments: Additional Information}
\label{app:experiments}

Experiments were run in an Ubuntu 20.04.5 LTS with 40 cores and 115 GB RAM. The repository of the project at \url{https://anonymous.4open.science/r/SizeConstrainedSpacing-B260} contains the code and instructions needed to generate the experimental data analyzed in the paper.

\subsection{\spmst: comparison between empirical results and upper bound of Algorithm \ref{alg:constrained-MST-SP-2}}

As mentioned in Section \ref{sec:experiments}, Algorithm \ref{alg:constrained-MST-SP-2} can in practice obtain clusterings that are much closer to the optimal \spmst \, than the prediction  guaranteed by Theorem \ref{thm:bound-contrained-mstsp}. In Table \ref{tab:comp-upbound}, we present, for each dataset: the average \spmst \, obtained by Algorithm \ref{alg:constrained-MST-SP-2}; 
the upper bound on the \spmst \, given by the sum of the \minsp \, for all partitions found by an execution of the algorithm; the approximation ratio of Algorithm \ref{alg:constrained-MST-SP-2}, given by  its \spmst \, divided by the upper bound;  and the theoretical approximation ratio $1/H_{k-1}$ from Theorem \ref{thm:bound-contrained-mstsp}.

For all 10 datasets, Algorithm \ref{alg:constrained-MST-SP-2} performs significantly better than its theoretical approximation ratio. The smallest gap between theoretical and empirical result occurs for {\tt avila} dataset, in which the algorithm is  21 percentage points closer to the optimal \spmst \; than Theorem \ref{thm:bound-contrained-mstsp} guarantees; On the other extreme, for  dataset {\tt newsgroups}, it actually achieves the best possible \spmst. 
These results increase our confidence that Algorithm \ref{alg:constrained-MST-SP-2} is a good option for finding separated groups.

\begin{table}[]
\caption{Comparison of \spmst \, of Algorithm \ref{alg:constrained-MST-SP-2} with the upper bound given by Lemma \ref{lem:uppbound09May}}
\label{tab:comp-upbound}
\begin{center}
    
\begin{tabular}{cccccc}
\toprule
\multicolumn{1}{l}{} & \multicolumn{1}{l}{k} & \multicolumn{1}{l}{\spmst} & \multicolumn{1}{l}{$\sum_{\ell=2}^k$ \minsp($\A'_{\ell})$} & \multicolumn{1}{l}{Approximation Ratio} & \multicolumn{1}{l}{1/$H_{k-1}$} \\ 
\midrule
anuran               & 10                    & 1.87                         & 2.42                            & 0.77                      & 0.35                                     \\
avila                & 12                    & 0.81                         & 1.48                            & 0.55                      & 0.34                                     \\
collins              & 30                    & 12.42                        & 13.81                           & 0.9                       & 0.33                                     \\
digits               & 10                    & 178.22                       & 201.47                          & 0.88                      & 0.32                                     \\
letter               & 26                    & 5.67                         & 5.76                            & 0.98                      & 0.31                                     \\
mice                 & 8                     & 5.66                         & 7.12                            & 0.79                      & 0.31                                     \\
newsgroups           & 20                    & 19                           & 19                              & 1                         & 0.3                                      \\
pendigits            & 10                    & 217.01                       & 303.37                          & 0.72                      & 0.3                                      \\
sensorless           & 11                    & 1.36                         & 2.23                            & 0.61                      & 0.29                                     \\
vowel                & 11                    & 4.94                         & 5.57                            & 0.89                      & 0.29                                   \\
\bottomrule
\end{tabular}
\end{center}

\end{table}

\subsection{Average size of smallest clusters}

Table \ref{tab:small-clusters} presents the average size of the smallest group generated by Algorithms \ref{alg:constrained-mispacing} and \ref{alg:constrained-MST-SP-2} and $k$-means. Values tend to be close across all  algorithms, and for all iterations of the experiments the smallest group returned by Algorithms \ref{alg:constrained-mispacing} and \ref{alg:constrained-MST-SP-2} is at least as large as the smallest group from the corresponding $k$-means clustering; recall that $L$ is set as $4s/3$, where $s$ is the size of the smallest group produced by $k$-means. In particular, thanks to the rule of iterating from the largest cluster to the smallest when building our $k$-clustering from an $\ell$-clustering, the theoretical possibility that the smallest group induced by Algorithm \ref{alg:constrained-MST-SP-2} is $1/2$ of the desired size does not appear to happen in practice. 

\begin{table}[]
    \caption{Average size of the smallest cluster in a $k$-clustering, per algorithm and dataset.}
\label{tab:small-clusters}

\begin{center}
\begin{tabular}{c|cc||ccc}
\toprule
\multirow{2}{*}{}  & \multicolumn{2}{c||}{Dimensions}      & \multicolumn{3}{c}{Average size of smallest cluster}     \\
     & n & k & Algorithm \ref{alg:constrained-mispacing}  & Algorithm \ref{alg:constrained-MST-SP-2} & $k$-means  \\ \midrule

anuran     & 7,195  & 10 & 264.5 & 266.7 & 264.1  \\
avila      & 20,867 & 12        & 85.3            & 85.4   & 83.2   \\
collins    & 1,000  & 30 & 7.7 & 7.7 & 7.7  \\
digits     & 1,797  & 10  & 96.2 & 96.2 & 92.7  \\
letter     & 20,000 & 26 & 192 &	258.7 & 191.5   \\
mice       & 552   & 8  & 47.6 &	47.6 &	47     \\
newsgroups & 18,846 & 20             & 188.1	& 188.2	& 188.1	   \\
pendigits  & 10,992 & 10  &   483.2	& 476.4 &	463.8 \\
sensorless & 58,509 & 11 & 1,842.5 &	1,725.5 &	1,655.6  \\
vowel      & 990   & 11  & 57.9 & 57.9 & 57.9   \\

\bottomrule

\end{tabular}
\end{center}
\end{table}

\subsection{Fast version of Algorithm \ref{alg:constrained-MST-SP-2}}

As mentioned in Section \ref{sec:experiments}, the $\log{k}$ bound of Theorem \ref{thm:bound-contrained-mstsp} is still valid for Algorithm \ref{alg:constrained-MST-SP-2} if, instead of investigating all values of $\ell$ from 2 to $k$, it considers only the values that can be written as $ \lceil k / 2^t \rceil$, for  $t=0,\ldots,\lfloor \log k \rfloor$. In Table \ref{tab:comparison-algo2} we compare the results of this fast version of the algorithm with those of the full version.

Even considering the overhead of running the single-linkage algorithm, which cannot be avoided for both versions of Algorithm \ref{alg:constrained-MST-SP-2}, we see a reduction of at least 30\% in the algorithm's running time when using the fast version. The loss in terms of \spmst, on the other hand, is  less than 10\% in the worst scenario, and in 5 of the 10 datasets analyzed both versions return the same clustering.

\begin{table}[]
    \caption{\minsp, \spmst \, and execution time for Algorithm \ref{alg:constrained-MST-SP-2}.}
\label{tab:comparison-algo2}

\begin{center}
\begin{tabular}{c|cc||cc||cc}
\toprule
\multirow{2}{*}{}  & \multicolumn{2}{c||}{Dimensions}      & \multicolumn{2}{c}{\spmst} & \multicolumn{2}{c}{Time (seconds)}     \\
     & n & k                & Fast  & Full  & Fast & Full \\ \midrule

anuran     & 7,195  & 10 & 1.71  & {\bf 1.87}            & \textbf{2.55}   & 6.77   \\
avila      & 20,867 & 12        & 0.77            & \textbf{0.81}   & {\bf 20.47} & 62.51   \\
collins    & 1,000  & 30 & \textbf{12.42}  & \textbf{12.42}  & {\bf 0.76} & 4.48   \\
digits     & 1,797  & 10  & \textbf{178.22} & \textbf{178.22} & {\bf 0.55} & 1.78  \\
letter     & 20,000 & 26 & 5.66            & \textbf{5.67}   & {\bf 23.44} & 116.44   \\
mice       & 552   & 8  & \textbf{5.66}   & \textbf{5.66}   & {\bf 0.27} & 0.44     \\
newsgroups & 18,846 & 20             & {\bf 19}              & \textbf{19}     & {\bf 307.58} & 440.38   \\
pendigits  & 10,992 & 10     & 215.11 & \textbf{217.01} & {\bf 6.13}  &  16.02 \\
sensorless & 58,509 & 11 & 1.34           & \textbf{1.36}   & {\bf 274.00} & 701.42  \\
vowel      & 990   & 11  & \textbf{4.94}   & \textbf{4.94}   & {\bf 0.18} & 0.59   \\

\bottomrule

\end{tabular}
\end{center}
\end{table}

\subsection{Distribution of results for \minsp \, and \spmst}

Figures \ref{fig:boxes_min} and \ref{fig:boxes_mst} show the boxplots for the \minsp \, and the \spmst, respectively, per dataset and algorithm. Algorithm \ref{alg:constrained-MST-SP-2} presents some large variations, when compared to both k-means and Algorithm \ref{alg:constrained-mispacing}, in terms of \minsp \, (Figure \ref{fig:boxes_min}) for some datasets; as it is designed to maximize the \spmst,  this behavior in the other metric presented here should not be too concerning. Also in terms of \minsp, both algorithms presented in the paper clearly outperform k-means in almost all datasets, even considering the variation in results. The same can be said for the \spmst \,(Figure \ref{fig:boxes_mst}), in which, additionally, the range of results returned by Algorithm \ref{alg:constrained-MST-SP-2} is much more in line with those returned by the two other algorithms. 

\begin{figure}
     \centering
     \begin{subfigure}[b]{\textwidth}
         \centering
         \includegraphics[width=\textwidth]{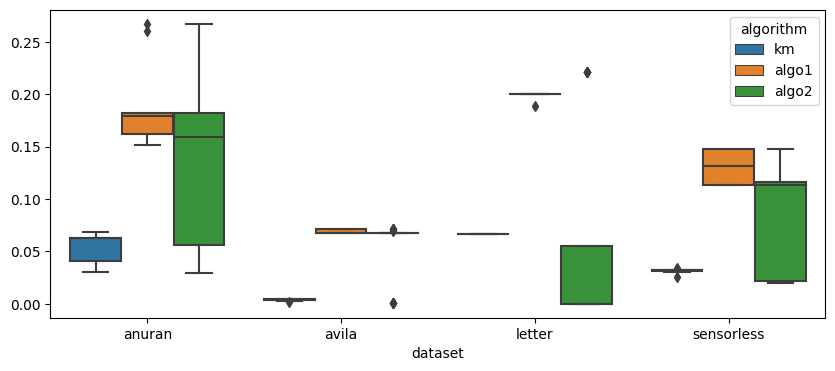}
         \label{fig:box_min_1}
     \end{subfigure}
     \hfill
     \begin{subfigure}[b]{\textwidth}
         \centering
         \includegraphics[width=\textwidth]{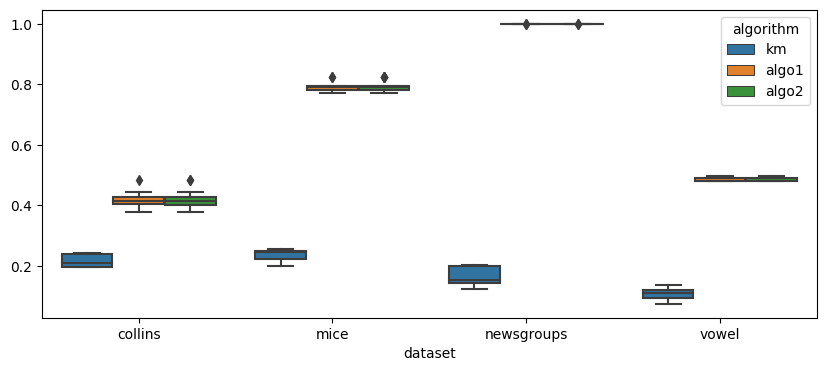}
         \label{fig:box_min_2}
     \end{subfigure}
     \hfill
     \begin{subfigure}[b]{\textwidth}
         \centering
         \includegraphics[width=\textwidth]{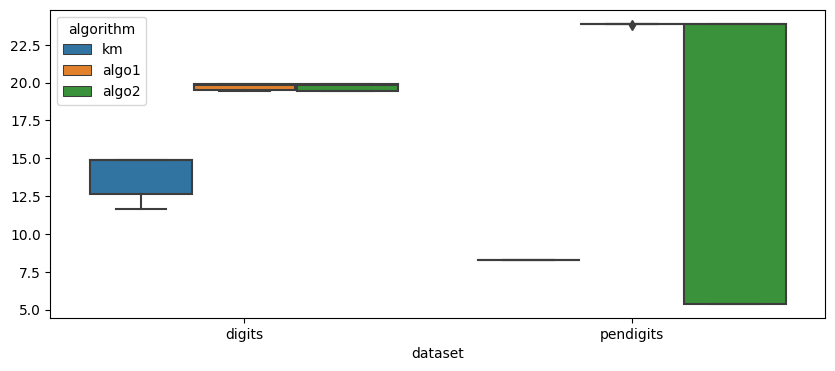}
         \label{fig:box_min_3}
     \end{subfigure}
        \caption{Boxplots of the \minsp \, per dataset and algorithm.}
        \label{fig:boxes_min}
\end{figure}

\begin{figure}
     \centering
     \begin{subfigure}[b]{\textwidth}
         \centering
         \includegraphics[width=\textwidth]{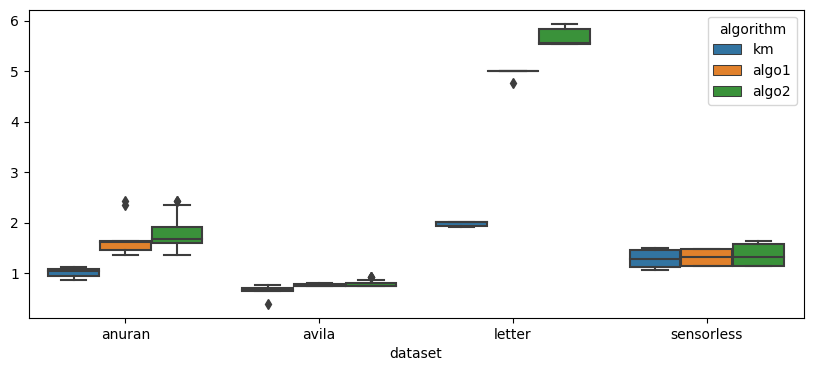}
         \label{fig:box_mst_1}
     \end{subfigure}
     \hfill
     \begin{subfigure}[b]{\textwidth}
         \centering
         \includegraphics[width=\textwidth]{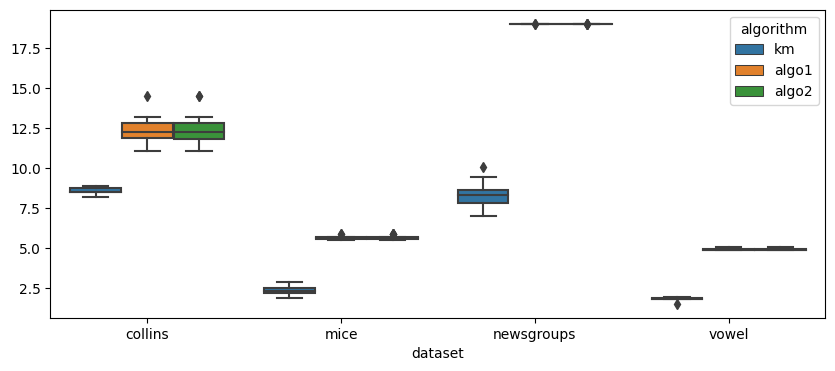}
         \label{fig:box_mst_2}
     \end{subfigure}
     \hfill
     \begin{subfigure}[b]{\textwidth}
         \centering
         \includegraphics[width=\textwidth]{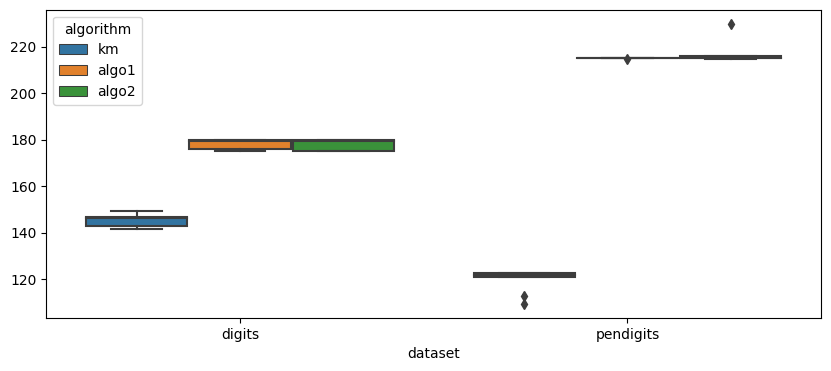}
         \label{fig:box_mst_3}
     \end{subfigure}
        \caption{Boxplots of the \spmst \, per dataset and algorithm.}
        \label{fig:boxes_mst}
\end{figure}

\subsection{Trade-off between size of smallest cluster and inter-group separability criteria}

In Figure \ref{fig:tradeoff} we present scatterplots for our 5 smallest datasets showing how the quality of the clusterings generated by Algorithms \ref{alg:constrained-mispacing} and \ref{alg:constrained-MST-SP-2} (considering, respectively, the \minsp \, and the \spmst \, as criteria) increases as we allow for clusters of smaller sizes. For all datasets,
as expected, allowing for smallest clusters leads to higher \minsp \, and \spmst.  
It is still noteworthy that the algorithms presented in this paper can be used not only to find a good partition with a hard limit on the size of the smallest cluster, but also to find the best balance between minimum size and a good separation of clusters.

\begin{figure}
     \centering
     \begin{subfigure}[b]{0.45\textwidth}
         \centering
         \includegraphics[width=\textwidth]{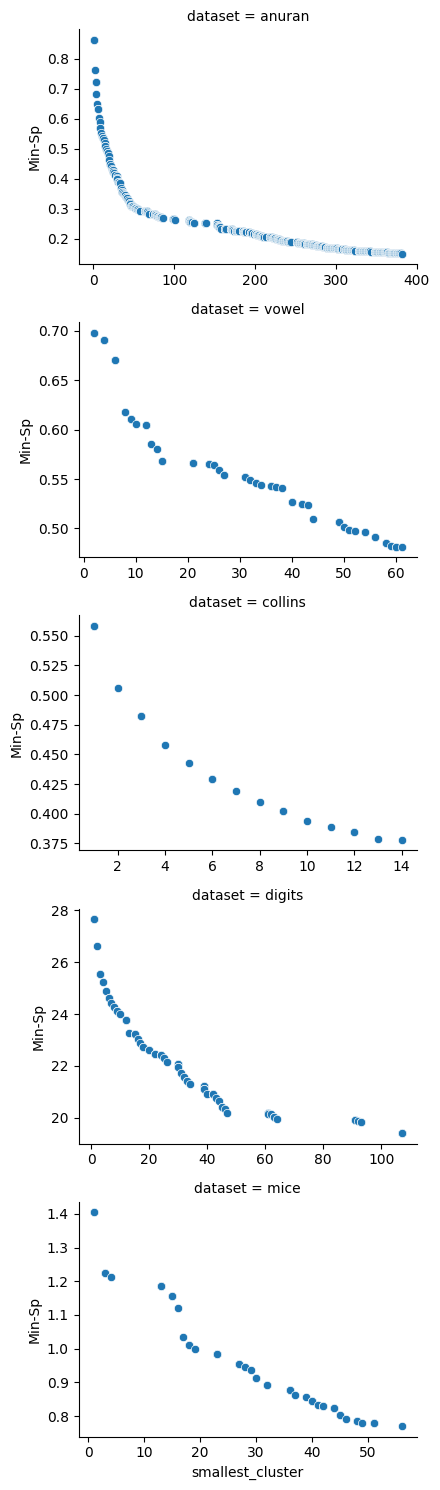}
         \label{fig:min_dist_tradeoff}
         \caption{Algorithm \ref{alg:constrained-mispacing}:\\ smallest cluster vs. \minsp.}
     \end{subfigure}
     \begin{subfigure}[b]{0.45\textwidth}
         \centering
         \includegraphics[width=\textwidth]{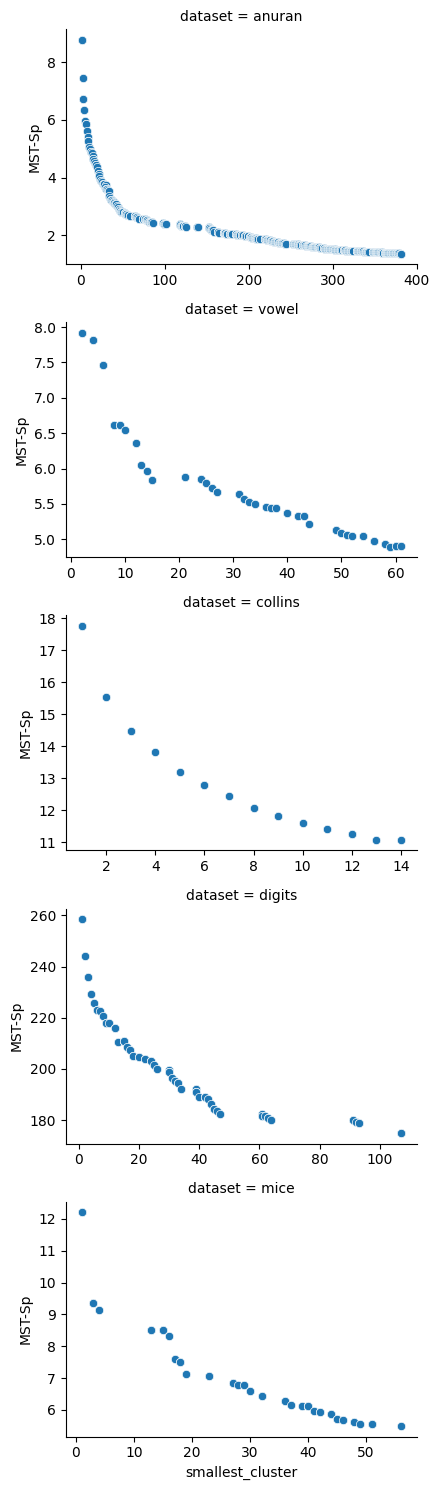}
         \label{fig:mst_cost_tradeoff}
         \caption{Algorithm \ref{alg:constrained-MST-SP-2}:\\ smallest cluster vs. \spmst.}
     \end{subfigure}
        \caption{Trade-off between the size of the smallest cluster and the separability criteria.}
        \label{fig:tradeoff}
\end{figure}

\subsection{Effect of randomness on Algorithm \ref{alg:constrained-MST-SP-2}'s results}

While Algorithm \ref{alg:constrained-mispacing} is fully deterministic, in Algorithm \ref{alg:constrained-MST-SP-2} the split of clusters from an $\ell$-clustering to turn it into a $k$-clustering is performed randomly. In practice, however, this does not affect the results of the algorithm.

For each dataset, we ran 10 seeded iterations of Algorithm \ref{alg:constrained-MST-SP-2} for each value of $L$ used in the experiments. We then calculate the standard deviation of the \spmst \, for each value of $L$. As shown in Table \ref{tab:mst_cost_std}, for 8 of the datasets analyzed the \spmst \, of the clustering returned by Algorithm \ref{alg:constrained-MST-SP-2} is always the same for a given value of $L$; for {\tt letter} and {\tt sensorless}, there is some variation, but it is very small compared to the average \spmst \, returned by the algorithm.

\begin{table}[]
    \caption{Maximum standard deviation of \spmst \, for Algorithm \ref{alg:constrained-MST-SP-2}.}
\label{tab:mst_cost_std}

\begin{center}
\begin{tabular}{c|cc||cc}
\toprule
\multirow{2}{*}{}  & \multicolumn{2}{c||}{Dimensions} & \multicolumn{2}{c}{\spmst} \\
     & n & k & $\mu$ & $\max{\sigma}$ \\ \midrule

anuran     & 7,195  & 10 & 1.87 & -   \\
avila      & 20,867 & 12 & 0.81 & - \\
collins    & 1,000  & 30 & 12.42 & - \\
digits     & 1,797  & 10 & 178.22 & - \\
letter     & 20,000 & 26 & 5.67 & 0.34 \\
mice       & 552   & 8  & 5.66 & - \\
newsgroups & 18,846 & 20 & 19 & - \\
pendigits  & 10,992 & 10 & 217.01 & - \\
sensorless & 58,509 & 11 & 1.96 & 0.002  \\
vowel      & 990   & 11 & 4.94 & - \\

\bottomrule

\end{tabular}
\end{center}
\end{table}

\subsection{Relative quadratic loss for Algorithms \ref{alg:constrained-mispacing} and \ref{alg:constrained-MST-SP-2}}

In Table \ref{tab:quadloss} we present the quadratic loss of both Algorithm \ref{alg:constrained-mispacing} and Algorithm \ref{alg:constrained-MST-SP-2} as a multiple of the loss incurred by the $k$-means algorithm, which is specifically designed to minimize this loss. As expected, since both algorithms were devised for maximizing inter-group criteria, they perform poorly in light of this intra-group loss function --- with the sole exception of the {\tt newsgroups} dataset, for which both algorithms incur a loss only 5\% above that of $k$-means. Across datasets, the performance of both algorithms is similar for this loss, with only small variations.

\begin{table}[]
    \caption{Quadratic loss ($k$-means cost) for Algorithms \ref{alg:constrained-mispacing} and \ref{alg:constrained-MST-SP-2}.}
\label{tab:quadloss}

\begin{center}
\begin{tabular}{c|cc||cc}
\toprule
\multirow{2}{*}{}  & \multicolumn{2}{c||}{Dimensions} & \multicolumn{2}{c}{Loss (relative to $k$-means)}     \\
     & n & k                & Algorithm \ref{alg:constrained-mispacing} & Algorithm \ref{alg:constrained-MST-SP-2} \\ \midrule

anuran     & 7,195  & 10 & 2.50 & \textbf{2.07}   \\
avila      & 20,867 & 12 & \textbf{2.81}  & \textbf{2.81}   \\
collins    & 1,000  & 30 & \textbf{2.33}  & \textbf{2.33}   \\
digits     & 1,797  & 10 & \textbf{1.33} & \textbf{1.33} \\
letter     & 20,000 & 26 & {\bf 2.30}   & 2.52 \\
mice       & 552   & 8  & \textbf{2.52}  & \textbf{2.52} \\
newsgroups & 18,846 & 20 & \textbf{1.05}     & {\bf 1.05}   \\
pendigits  & 10,992 & 10 & 2.18 & \textbf{2.10} \\
sensorless & 58,509 & 11 & {\bf 4.32} & 5.25  \\
vowel      & 990   & 11 & \textbf{2.07}  & \textbf{2.07} \\

\bottomrule

\end{tabular}
\end{center}
\end{table}

\end{document}